\algnewcommand\algorithmicparfor{\textbf{parfor}}
\algnewcommand\algorithmicpardo{\textbf{do}}
\algnewcommand\algorithmicendparfor{\textbf{end\ parfor}}
\theoremstyle{definition}
\theoremstyle{definition}
\newtheorem{proposition}{Proposition}
\newcommand{\norm}[1]{\left\lVert#1\right\rVert}
\title{Adversarial score matching and improved sampling for image generation}
\begin{document}

\maketitle
\renewcommand{\thefootnote}{\fnsymbol{footnote}}

\vspace{-1cm}
\hspace{-1cm}
\setlength{\tabcolsep}{30pt}
\begin{tabular}{ll}
\textbf{Alexia Jolicoeur-Martineau}\textsuperscript{$\dagger$} & \textbf{Rémi Piché-Taillefer}\textsuperscript{$\dagger$} \\
    \normalfont Department of Computer Science & \normalfont Department of Computer Science \\
    \normalfont University of Montreal & \normalfont University of Montreal \\  \vspace{5pt} \\
    \textbf{Ioannis Mitliagkas} & \textbf{Rémi Tachet des Combes} \\ \normalfont Department of Computer Science & \normalfont Microsoft Research Lab \\
    \normalfont University of Montreal & \normalfont Microsoft Research Montreal
\end{tabular}
\vspace{1cm}
\setlength{\tabcolsep}{6pt}
\footnote[0]{$\dagger$ First two authors have equal contributions}

\begin{abstract}
Denoising Score Matching with Annealed Langevin Sampling (DSM-ALS) has recently found success in generative modeling. The approach works by first training a neural network to estimate the score of a distribution, and then using Langevin dynamics to sample from the data distribution assumed by the score network. Despite the convincing visual quality of samples, this method appears to perform worse than Generative Adversarial Networks (GANs) under the Fréchet Inception Distance, a standard metric for generative models. We show that this apparent gap vanishes when \em denoising \em the final Langevin samples using the score network.
In addition, we propose two improvements to DSM-ALS:  1) Consistent Annealed Sampling as a more stable alternative to Annealed Langevin Sampling, and 2) a hybrid training formulation, composed of both Denoising Score Matching and adversarial objectives. By combining these two techniques and exploring different network architectures, we elevate score matching methods and obtain results competitive with state-of-the-art image generation on CIFAR-10.

\end{abstract}

\section{Introduction}

\citet{song2019generative} recently proposed a novel method of generating samples from a target distribution through a combination of Denoising Score Matching (DSM) \citep{hyvarinen2005estimation, vincent2011connection, raphan2011least} and Annealed Langevin Sampling (ALS) \citep{welling2011bayesian,roberts1996exponential}. 
Since convergence to the distribution is guaranteed by the ALS, their approach (DSM-ALS) produces high-quality samples and guarantees high diversity. Though, this comes at the cost of requiring an iterative process during sampling, contrary to other generative methods.

\citet{song2020improved} further improved their approach by increasing the stability of score matching training and proposing theoretically sound choices of hyperparameters. They also scaled their approach to higher-resolution images and showed that DSM-ALS is competitive with other generative models.
\citet{song2020improved} observed that the images produced by their improved model were more visually appealing than the ones from their original work;
however, the reported Fréchet Inception Distance (FID) \citep{heusel2017gans} did not correlate with this improvement. 

Although DSM-ALS is gaining traction, Generative adversarial networks (GANs) \citep{GAN} remain the leading approach to generative modeling. GANs are a very popular class of generative models; they have been successfully applied to image generation \citep{brock2018large,karras2017progressive,karras2019style,karras2020analyzing} and have subsequently spawned a wealth of variants \citep{radford2015unsupervised,miyato2018spectral,jolicoeur2018relativistic,zhang2019self}. The idea behind this method is to train a Discriminator ($D$) to correctly distinguish real samples from fake samples generated by a second agent, known as the Generator ($G$). GANs excel at generating high-quality samples as the discriminator captures features that make an image plausible, while the generator learns to emulate them.

Still, GANs often have trouble producing data from all possible modes, which limits the diversity of the generated samples. A wide variety of tricks have been developed to address this issue in GANs \citep{kodali2017convergence,WGAN-GP,WGAN,miyato2018spectral,jolicoeur2019connections}, though it remains an issue to this day. DSM-ALS, on the other hand, does not suffer from that problem since ALS allows for sampling from the full distribution captured by the score network. Nevertheless, the perceptual quality of DSM-ALS higher-resolution images has so far been inferior to that of GAN-generated images. Generative modeling has since seen some incredible work from \citet{ho2020denoising}, who achieved exceptionally low (better) FID on image generation tasks. Their approach showcased a diffusion-based method \citep{sohl2015deep, goyal2017variational} that shares close ties with DSM-ALS, and additionally proposed a convincing network architecture derived from \citet{salimans2017pixelcnn++}.

In this paper, after introducing the necessary technical background in the next section, we build upon the work of \citet{song2020improved} and propose improvements based on theoretical analyses both at training and sampling time. Our contributions are as follows:

\begin{itemize}
 \item We propose Consistent Annealed Sampling (CAS) as a more stable alternative to ALS, correcting inconsistencies relating to the scaling of the added noise;
 \item We show how to recover the \em expected denoised sample \em (EDS) and demonstrate its unequivocal benefits \em w.r.t \em the FID. Notably, we show how to resolve the mismatch observed in DSM-ALS between the visual quality of generated images and its high (worse) FID;
 \item We propose to further exploit the EDS through a hybrid objective function, combining GAN and Denoising Score Matching objectives, thereby encouraging the EDS of the score network to be as realistic as possible.
\end{itemize}

In addition, we show that the network architecture used used by \citet{ho2020denoising} significantly improves sample quality over the RefineNet \citep{lin2017refinenet} architecture used by \citet{song2020improved}. In an ablation study performed on CIFAR-10 and LSUN-church, we demonstrate how these contributions bring DSM-ALS in range of the state-of-the-art for image generation tasks \em w.r.t. \em the FID. The code to replicate our experiments is publicly available at 
\textbf{[Available in supplementary material]}.

\section{Background}\label{sec:back}

\subsection{Denoising Score Matching}

Denoising Score Matching (DSM) \citep{hyvarinen2005estimation} consists of training a score network to approximate the gradient of the log density of a certain distribution ($\nabla_{\boldsymbol{x}} \log p(\boldsymbol{x})$), referred to as the score function. This is achieved by training the network to approximate a noisy surrogate of $p$ at multiple levels of Gaussian noise corruption \citep{vincent2011connection}.
The score network $s$, parametrized by $\theta$ and conditioned on the noise level $\sigma$, is tasked to minimize the following loss:
\begin{equation}\label{eqn:score}
\frac{1}{2} ~ \mathbb{E}_{p(\boldsymbol{\tilde{x}}, 
\boldsymbol{x}, \sigma)}\left[ \norm{\sigma s_{\theta}(\boldsymbol{\tilde{x}},\sigma) + \frac{\boldsymbol{\tilde{x}} - \boldsymbol{x}}{\sigma} }_2^2 \right],
\end{equation}

where $p(\boldsymbol{\tilde{x}}, \boldsymbol{x}, \sigma) = q_\sigma(\boldsymbol{\tilde{x}} | \boldsymbol{x})p(\boldsymbol{x})p(\sigma)$. We define further  $q_{\sigma}(\boldsymbol{\tilde{x}}|\boldsymbol{x}) = \mathcal{N}(\boldsymbol{\tilde{x}} | {\boldsymbol{x}},\sigma^2 \boldsymbol{I})$ the corrupted data distribution, $p(\boldsymbol{x})$ the training data distribution, and $p(\sigma)$ the uniform distribution over a set $\{\sigma_i\}$ corresponding to different levels of noise.
In practice, this set is defined as a geometric progression between $\sigma_1$ and $\sigma_L$ (with $L$ chosen according to some computational budget):
\begin{align}\label{sigma_set}\left\{\sigma_i\right\}_{i=1}^L = \left\{\gamma^i\sigma_1 ~ \Big| \hspace{4pt} i \in \{0, \dots, L-1\},\gamma \triangleq \frac{\sigma_2}{\sigma_1} = ... = \left( \frac{\sigma_L}{\sigma_1} \right)^{\frac{1}{L-1}} < 1\right\}.
\end{align}

Rather than having to learn a different score function for every $\sigma_i$, one can train an unconditional score network by defining $s_{\theta}(\boldsymbol{\tilde{x}},\sigma_i)=s_{\theta}(\boldsymbol{\tilde{x}})/\sigma_i$, and then minimizing Eq. \ref{eqn:score}. While unconditional networks are less heavy computationally, it remains an open question whether conditioning helps performance. \citet{li2019learning} and \citet{song2020improved} found that the unconditional network produced better samples, while \citet{ho2020denoising} obtained better results than both of them using a conditional network. Additionally, the denoising autoencoder described in \citet{AR-DAE} gives evidence supporting the benefits of conditioning when the noise becomes small (also see App.~\ref{app:C} and~\ref{app:D} for a theoretical discussion of the difference).
While our experiments are conducted with unconditional networks, we believe our techniques can be straightforwardly applied to conditional networks; we leave that extension for future work.

\subsection{Annealed Langevin Sampling}

Given a score function, one can use Langevin dynamics (or Langevin sampling) \citep{welling2011bayesian} to sample from the corresponding probability distribution. In practice, the score function is generally unknown and estimated through a score network trained to minimize Eq. \ref{eqn:score}. \citet{song2019generative} showed that Langevin sampling has trouble exploring the full support of the distribution when the modes are too far apart and proposed Annealed Langevin Sampling (ALS) as a solution. ALS starts sampling with a large noise level and progressively anneals it down to a value close to $0$, ensuring both proper mode coverage and convergence to the data distribution. Its precise description is shown in Algorithm \ref{alg:anneal1}. 

\begin{minipage}{0.49\textwidth}
\begin{algorithm}[H]
	\caption{Annealed Langevin Sampling}
	\label{alg:anneal1}
	\begin{algorithmic}[1]
	    \Require{$s_\theta, \{\sigma_i\}_{i=1}^L, \epsilon, n_\sigma$.}
	    \vspace{0.06cm}
	    \State{Initialize $\boldsymbol{x}$}
	    \vspace{0.12cm}
	    \For{$i \gets 1$ to $L$}
	        \State{$\alpha_i \gets \epsilon ~ \sigma_i^2/\sigma_L^2$}
	        \For{$t \gets 1$ to $n_\sigma$}
                \State{Draw $\boldsymbol{z} \sim \mathcal{N}(0, \boldsymbol{I})$}
                \State{$\boldsymbol{x} \gets \boldsymbol{x} + \alpha_i s_\theta(\boldsymbol{x}, \sigma_i) + \sqrt{2 \alpha_i} \boldsymbol{z}$}
            \EndFor
        \EndFor
        \item[]
        \Return{$\boldsymbol{x}$}
        
	\end{algorithmic}
\end{algorithm}
\end{minipage}
\hfill
\begin{minipage}{0.50\textwidth}
\begin{algorithm}[H]
	\caption{ Consistent Annealed Sampling}
	\label{alg:anneal2}
	\begin{algorithmic}[1]
	    \Require{$s_\theta, \{\sigma_i\}_{i=1}^{L}, \gamma , \epsilon$, $\sigma_{L+1} = 0$} 
	    \State{Initialize $\boldsymbol{x}$}
        \State{$\beta \gets \sqrt{1 - \left(1 - \epsilon/\sigma_{L}^2\right)^2 / \gamma^2}$}
        \For{$i \gets 1$ to $L$}
	        \State{$\alpha_i \gets \epsilon ~ \sigma_i^2/\sigma_L^2$}
            \State{Draw $\boldsymbol{z} \sim \mathcal{N}(0, \boldsymbol{I})$} 
           \State{$\boldsymbol{x} \gets \boldsymbol{x} + \alpha_i s_\theta(\boldsymbol{x}, \sigma_{i}) + \beta \sigma_{i+1}  \boldsymbol{z}$}
        \EndFor
        \item[]
        \Return{$\boldsymbol{x}$}
	\end{algorithmic}
\end{algorithm}
\end{minipage}

\subsection{Expected denoised sample (EDS)}\label{sec:EDS}

A little known fact from Bayesian literature is that one can recover a denoised sample from the score function using the Empirical Bayes mean \citep{robbins1955empirical,miyasawa1961empirical, raphan2011least}:
\begin{equation}\label{eqn:eds}s^{*}(\boldsymbol{\tilde{x}},\sigma) = \frac{H^*(\boldsymbol{\tilde{x}},\sigma) - \boldsymbol{\tilde{x}}}{\sigma^2},\end{equation}
where $H^*(\boldsymbol{\tilde{x}},\sigma) \triangleq \mathbb{E}_{\boldsymbol{x} \sim q_\sigma(\boldsymbol{x} | \boldsymbol{\tilde{x}})}[\boldsymbol{x}]$ is the expected denoised sample given a noisy sample (or Empirical Bayes mean), conditioned on the noise level. A different way of reaching the same result is through the closed-form of the optimal score function, as presented in Appendix \ref{app:C}. The corresponding result for unconditional score function is presented in Appendix \ref{app:D} for completeness.

The EDS corresponds to the expected real image given a corrupted image; it can be thought of as what the score network believes to be the true image concealed within the noisy input. It has also been suggested that denoising the samples (i.e., taking the EDS) at the end of the Langevin sampling improves their quality \citep{saremi2019neural, li2019learning, kadkhodaie2020solving}. In Section \ref{sec:sampling_improvements}, we provide further evidence that denoising the final Langevin sample brings it closer to the assumed data manifold. In particular, we show that the Fréchet Inception Distance (FID) consistently decreases (improves) after denoising. Finally, in Section \ref{sec:adversarial}, we build a hybrid training objective using the properties of the EDS discussed above.

\section{Consistent scaling of the noise}\label{sec:consistent}

In this section, we present inconsistencies in ALS relating to the noise scaling and introduce Consistent Annealed Sampling (CAS) as an alternative.

\subsection{Inconsistencies in ALS}

One can think of the ALS algorithm as a sequential series of Langevin Dynamics (inner loop in Algorithm \ref{alg:anneal1}) for decreasing levels of noise (outer loop). If allowed an infinite number of steps $n_\sigma$, the sampling process will properly produce samples from the data distribution. 

In ALS, the score network is conditioned on geometrically decreasing noise ($\sigma_i$). In the unconditional case, this corresponds to dividing the score network by the noise level (\em i.e., \em $s_{\theta}(\boldsymbol{\tilde{x}},\sigma_i)=s_{\theta}(\boldsymbol{\tilde{x}})/\sigma_i$). Thus, in both conditional and unconditional cases, we make the assumption that the noise of the sample at step $i$ will be of variance $\sigma_i^2$, an assumption upon which the quality of the estimation of the score depends. 
While choosing a geometric progression of noise levels seems like a reasonable (though arbitrary) schedule to follow, we show that ALS does not ensure such schedule.

Assume we have the true score function $s^*$ and begin sampling using a real image with some added zero-centered Gaussian noise of standard deviation $\sigma_0 = 50$. In Figure \ref{fig_std}, we illustrate how the intensity of the noise in the sample evolves through ALS and CAS, our proposed sampling, for a given sampling step size $\epsilon$ and a geometric schedule  in this idealized scenario. We note that, although a large $n_\sigma$ approaches the real geometric curve, it will only reach it at the limit ($n_\sigma \to \infty$ and $\epsilon \to 0$). 
Most importantly, Figure \ref{fig_diff_std} highlights how even when the annealing process does converge, the progression of the noise is never truly geometric; we prove this formally in Proposition \ref{prop:1}.

\begin{figure}[ht!] 
\captionsetup[subfigure]{justification=centering}
    \centering
    \begin{subfigure}[t]{0.465\linewidth}
\includegraphics[width=\linewidth]{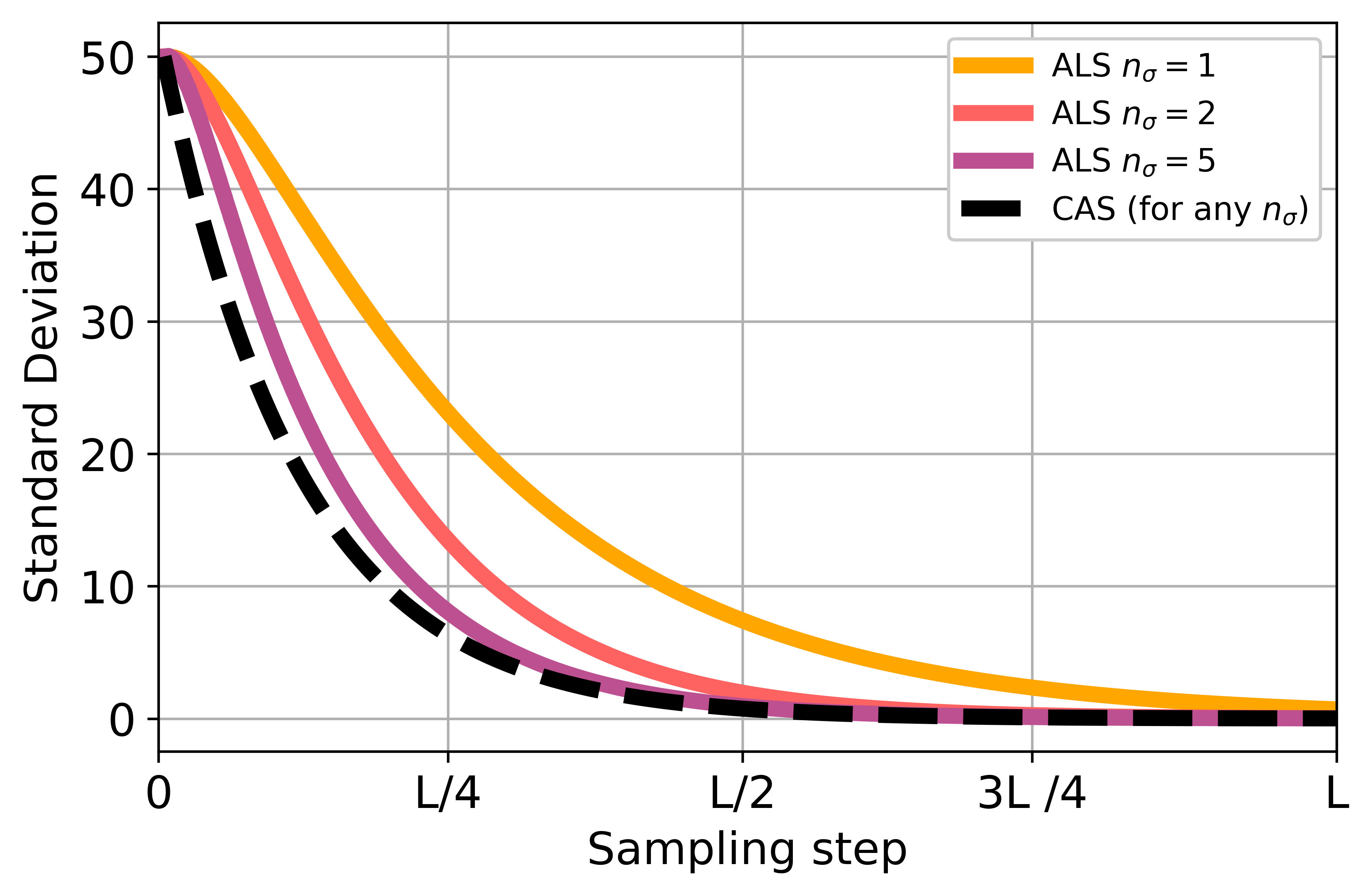}
    \caption{Standard deviation of the noise in the image} 
    \label{fig_std} 
  \end{subfigure}
  \begin{subfigure}[t]{0.47\linewidth}
    \includegraphics[width=\linewidth]{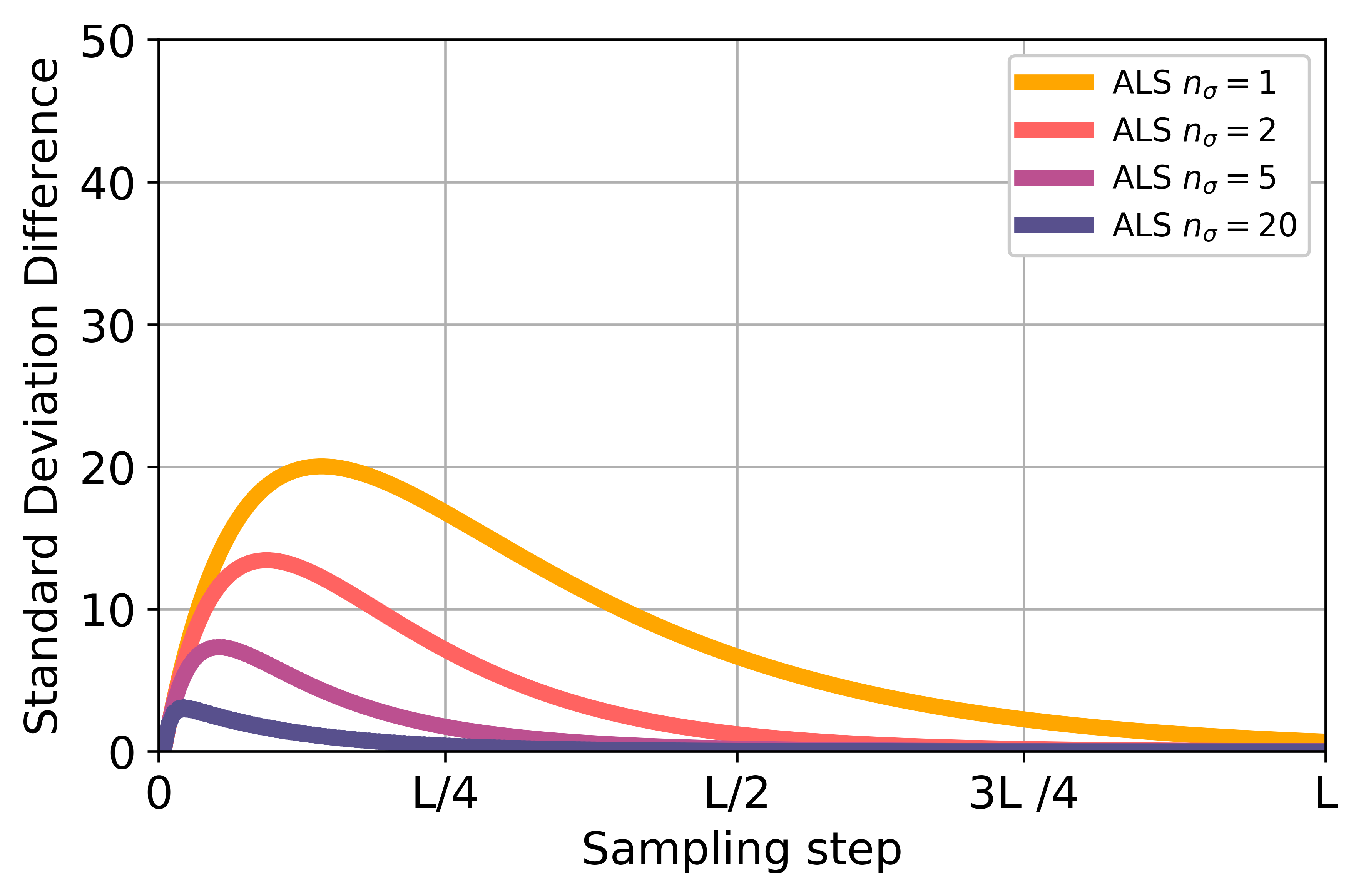}
    \caption{Difference between the standard deviation from ALS and CAS } 
    \label{fig_diff_std} 
  \end{subfigure} 
  \caption{Standard deviation during idealized sampling using a perfect score function $s^*$. The black curve in (a) corresponds to the true geometric progression, as demonstrated in Proposition \ref{proof:2}.}
  \label{fig:std} 
\end{figure}

\begin{proposition}\label{prop:1}
\em 
Let $s^*$ be the optimal score function from Eq. \ref{eqn:eds}. Following the sampling described in Algorithm \ref{alg:anneal1}, the variance of the noise component in the sample $\boldsymbol{x}$ will remain greater than $\sigma_t^2$ at every step $t$. \em 
\end{proposition}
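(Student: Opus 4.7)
The plan is to track the per-coordinate variance $v_k^2$ of the noise component through the ALS iterations, working in the idealized single-data-point scenario depicted in Figure~\ref{fig:std}. Writing $\boldsymbol{x}_k = \boldsymbol{x}^* + \boldsymbol{n}_k$, the Empirical Bayes mean collapses to $H^*(\boldsymbol{\tilde{x}},\sigma) = \boldsymbol{x}^*$, so Eq.~\ref{eqn:eds} gives $s^*(\boldsymbol{x}_k,\sigma_i) = -\boldsymbol{n}_k/\sigma_i^2$. Substituting into the ALS update with $\alpha_i = \epsilon\sigma_i^2/\sigma_L^2$ and setting $\eta := \epsilon/\sigma_L^2$, the noise evolves linearly,
\[
\boldsymbol{n}_{k+1} = (1-\eta)\,\boldsymbol{n}_k + \sqrt{2\eta}\,\sigma_i\,\boldsymbol{z}_k,
\]
and, by independence of $\boldsymbol{z}_k$ from $\boldsymbol{n}_k$, this yields the scalar recurrence $v_{k+1}^2 = (1-\eta)^2 v_k^2 + 2\eta\sigma_i^2$.

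The key manipulation is the change of variable $u_k := v_k^2 - \sigma_i^2$, under which the recurrence simplifies to
\[
u_{k+1} = (1-\eta)^2 u_k + \eta^2 \sigma_i^2.
\]
The strictly positive affine term $\eta^2\sigma_i^2$ forces $u_{k+1} > 0$ whenever $u_k \geq 0$, so a strict excess over the target variance is automatically restored at every inner iteration, no matter how close $v_k^2$ came to $\sigma_i^2$ at the previous step. This already establishes the proposition \emph{within} a single outer step, provided the variance entering that step is at least $\sigma_i^2$.

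To close the induction across outer steps, I would invoke the geometric schedule $\sigma_{i+1} = \gamma\sigma_i$ with $\gamma<1$: at the end of outer step $i$ we have $v^2 > \sigma_i^2 > \sigma_{i+1}^2$, which is exactly the hypothesis $u_0 \geq 0$ needed to re-run the same recursion with $\sigma_i$ replaced by $\sigma_{i+1}$. Combined with the standard initialization $v_0^2 \geq \sigma_1^2$ (as in the setup of Figure~\ref{fig:std}), this propagates the invariant $v^2 > \sigma_t^2$ through every step of the algorithm.

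The main obstacle, and the reason the proof goes through so cleanly, is the linearity of the optimal score in $\boldsymbol{x}$, which holds only when $H^*$ is independent of $\boldsymbol{\tilde{x}}$. For a general data distribution, $H^*(\boldsymbol{\tilde{x}},\sigma)$ depends nonlinearly on $\boldsymbol{\tilde{x}}$ and the sample cannot be decomposed cleanly into signal plus additive Gaussian noise, so the closed-form variance recurrence is lost. The proposition should therefore be read in the idealized regime of Figure~\ref{fig:std}; extending the inequality to general $p(\boldsymbol{x})$ would require a perturbative analysis around a local mode and is a genuine further step I would not attempt here.
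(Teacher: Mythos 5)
Your proof is correct and rests on the same skeleton as the paper's own argument in Appendix \ref{app:als_bad_proof}: decompose the iterate into an image component plus a Gaussian noise component, note that the optimal score update contracts the noise by a factor $(1-\eta)$ while the injected noise adds $2\eta\sigma_t^2$ to the variance, and work with the resulting recurrence $v_{k+1}^2 = (1-\eta)^2 v_k^2 + 2\eta\sigma_t^2$. Where you genuinely differ is in how the conclusion is extracted from that recurrence. The paper solves it in closed form, identifies the limiting variance $2\sigma_t^2/(2-\eta) > \sigma_t^2$, and then argues via the limit together with a monotonicity condition that it must caveat as holding ``under conditions generally respected in practice.'' Your substitution $u_k = v_k^2 - \sigma_t^2$, which gives $u_{k+1} = (1-\eta)^2 u_k + \eta^2\sigma_t^2$, delivers the strict inequality in one line with no case analysis and no appeal to the limit, and it isolates the mechanism: the forcing term $\eta^2\sigma_t^2$ is strictly positive, so the excess variance can never be annealed away. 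You also make explicit two things the paper leaves implicit --- the induction across outer steps (the terminal variance at level $i$ exceeds $\sigma_i^2 > \sigma_{i+1}^2$, re-establishing the hypothesis for level $i+1$) and the fact that the clean signal-plus-additive-Gaussian decomposition is exact only in the idealized single-mode regime, an assumption the paper's ``image component / noise component'' phrasing quietly relies on as well. Both additions strengthen the argument rather than weaken it; the only thing you give up relative to the paper is the explicit value of the limiting variance, which is not needed for the proposition.
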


The proof is presented in Appendix \ref{app:als_bad_proof}.
In particular, for $n_\sigma < \infty$, sampling has not fully converged and the remaining noise is carried over to the next iteration of Langevin Sampling. It also follows that for any $s_\theta$ different from the optimal $s^*$, the actual noise at every iteration is expected to be even higher than for the best possible score function $s^*$. 

\subsection{Algorithm}

We propose Consistent Annealed Sampling (CAS) as a sampling method that ensures the noise level will follow a prescribed schedule for any sampling step size $\epsilon$ and number of steps $L$. Algorithm \ref{alg:anneal2} illustrates the process for a geometric schedule. Note that for a different schedule, $\beta$ will instead depend on the step $t$, as in the general case, $\gamma_t$ is defined as $\sigma_{t+1} / \sigma_{t}$.

\begin{proposition}\label{proof:2}
\em Let $s^*$ be the optimal score function from Eq. \ref{eqn:eds}. Following the sampling described in Algorithm \ref{alg:anneal2}, the variance of the noise component in the sample $\boldsymbol{x}$ will consistently be equal to $\sigma_t^2$ at every step $t$. \em 
\end{proposition}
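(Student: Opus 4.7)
The plan is to proceed by induction on the sampling step, working in the idealized single-image setting that underlies Proposition \ref{prop:1}: fix a clean reference image $\boldsymbol{x}_0$ and take as inductive hypothesis that at the start of iteration $i$ the sample decomposes as $\boldsymbol{x} = \boldsymbol{x}_0 + \sigma_i \boldsymbol{z}_i$ with $\boldsymbol{z}_i \sim \mathcal{N}(0,\boldsymbol{I})$ independent of every $\boldsymbol{z}$ drawn at iteration $\geq i$. The base case is immediate from the initialization with noise of standard deviation $\sigma_1$, and the goal in the inductive step is to show that the single CAS update of Algorithm \ref{alg:anneal2} produces $\boldsymbol{x}_{\mathrm{new}} = \boldsymbol{x}_0 + \sigma_{i+1}\boldsymbol{z}_{i+1}$ with $\boldsymbol{z}_{i+1}\sim\mathcal{N}(0,\boldsymbol{I})$.

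For the inductive step I would substitute the optimal score from Eq.~\ref{eqn:eds}. In the idealized scenario the Empirical Bayes mean collapses to $H^*(\boldsymbol{x},\sigma_i)=\boldsymbol{x}_0$, giving $s^*(\boldsymbol{x},\sigma_i) = (\boldsymbol{x}_0-\boldsymbol{x})/\sigma_i^2 = -\boldsymbol{z}_i/\sigma_i$. Plugging this and $\alpha_i = \epsilon\sigma_i^2/\sigma_L^2$ into the CAS update yields
\begin{equation*}
\boldsymbol{x}_{\mathrm{new}} = \boldsymbol{x}_0 + \sigma_i\bigl(1 - \epsilon/\sigma_L^2\bigr)\boldsymbol{z}_i + \beta\,\sigma_{i+1}\boldsymbol{z}.
\end{equation*}
Since $\boldsymbol{z}_i$ and $\boldsymbol{z}$ are independent centered Gaussians, the noise component is Gaussian with per-coordinate variance $\sigma_i^2(1-\epsilon/\sigma_L^2)^2 + \beta^2\sigma_{i+1}^2$.

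The last step is to verify that the constant $\beta$ prescribed by Algorithm \ref{alg:anneal2} is exactly what matches this variance to the target $\sigma_{i+1}^2$. Using $\sigma_{i+1}/\sigma_i = \gamma$, the equation $\sigma_i^2(1-\epsilon/\sigma_L^2)^2 + \beta^2\sigma_{i+1}^2 = \sigma_{i+1}^2$ rearranges to $\beta^2 = 1 - (1-\epsilon/\sigma_L^2)^2/\gamma^2$, which is precisely the value used in the algorithm. Renormalizing the noise then recovers a unit-variance Gaussian $\boldsymbol{z}_{i+1}$ independent of future draws, closing the induction.

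I do not expect a genuine obstacle here: once one notices that the expression for $\beta$ has been engineered to make the variances cancel, the argument is essentially bookkeeping. The only conceptually subtle point, shared with Proposition \ref{prop:1}, is that the statement is implicitly set in the idealized single-image regime so that $H^*$ reduces to a constant; care is needed to track the independence of the freshly drawn Gaussian $\boldsymbol{z}$ from the accumulated noise $\boldsymbol{z}_i$, which is what allows the variances to add.
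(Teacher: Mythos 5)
Your proof is correct and follows essentially the same route as the paper's: induction on the sampling step, substitution of the optimal score $s^*(\boldsymbol{x},\sigma_i) = (H^*(\boldsymbol{x},\sigma_i)-\boldsymbol{x})/\sigma_i^2$ into the update, and the observation that the variance $\sigma_i^2(1-\eta)^2 + \beta^2\sigma_{i+1}^2$ of the sum of the two independent Gaussian noise terms collapses to $\sigma_{i+1}^2$ precisely because $\beta$ was chosen as $\sqrt{1-(1-\eta)^2/\gamma^2}$. The only cosmetic difference is that you specialize to the single-image (Dirac) regime so that $H^*$ is a constant, whereas the paper keeps the general Empirical Bayes mean and simply assigns it to the ``image component''; this does not change the argument.
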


The proof is presented in Appendix \ref{app:cas_good_proof}. Importantly, Proposition \ref{proof:2} holds no matter how many steps $L$ we take to decrease the noise geometrically. For ALS, $n_{\sigma}$ corresponds to the number of steps per level of noise. It plays a similar role in CAS: we simply dilate the geometric series of noise levels used during training by a factor of $n_\sigma$, such that $L_\text{sampling} = (L_\text{training} - 1) n_\sigma + 1$. 
Note that the proposition only holds when the initial sample is a corrupted image ($i.e., ~~ \boldsymbol{x}_0 = \mathcal{I} + \sigma_0 \boldsymbol{z}_0$). However, by defining $\sigma_0$ as the maximum Euclidean distance between all pairs of training data points \citep{song2020improved},
the noise becomes in practice much greater than the true image; sampling with pure noise initialization ($i.e., ~~ \boldsymbol{x}_0 = \sigma_0\boldsymbol{z}_t$) becomes indistinguishable from sampling with data initialization.

\section{Benefits of the EDS on synthetic data and image generation }\label{sec:sampling_improvements}

As previously mentioned, it has been suggested that one can obtain better samples (closer to the assumed data manifold) by taking the EDS of the last Langevin sample. We provide further evidence of this with synthetic data and standard image datasets. 

It can first be observed that the sampling steps correspond to an interpolation between the previous point and the EDS, followed by the addition of noise.

\begin{proposition}\label{prop:interp}
\em Given a noise-conditional score function, the update rules from Algorithm \ref{alg:anneal1} and Algorithm \ref{alg:anneal2} are respectively equivalent to the following update rules: \em 
\begin{align*} &\boldsymbol{x} \gets (1-\eta)\boldsymbol{x} + \eta H(\boldsymbol{x},\sigma_i) + \sqrt{2\eta}\sigma_i \boldsymbol{z}  && \text{for } \boldsymbol{z} \sim \mathcal{N}(0, \boldsymbol{I}) ~~ \text{and} ~~ \eta = \frac{\epsilon}{\sigma_L^2} \\
&\boldsymbol{x} \gets (1-\eta)\boldsymbol{x} + \eta H(\boldsymbol{x},\sigma_i) + \beta \sigma_{i+1} \boldsymbol{z}
\end{align*}
\end{proposition}

The demonstration is in Appendix \ref{app:update_rule_proof}. This result is equally true for an unconditional score network, with the distinction that $\eta$ would no longer be independent of $\sigma_i$ but rather linearly proportional to it.

Intuitively, this implies that the sampling steps slowly move the current sample towards a moving target (the EDS). If the sampling behaves appropriately, we expect the final sample $\boldsymbol{x}$ to be very close to the EDS, \em i.e.,\em ~  $\boldsymbol{x} \approx H(\boldsymbol{x}, \sigma_L)$. However, if the sampling step size is inappropriate, or if the EDS does not stabilize to a fixed point near the end of the sampling, these two quantities may be arbitrarily far from one another. As we will show, the FIDs from \citet{song2020improved} suffer from such  distance. 

The equivalence showed in Proposition \ref{prop:interp} suggests instead to take the expected denoised sample at the end of the Langevin sampling as the final sample; this would be equivalent to the update rule
$ \boldsymbol{x} \gets H(\boldsymbol{x}, \sigma_L)$ at the last step. Synthetic 2D examples shown in Figure \ref{fig1} demonstrate the immediate benefits of this technique.

\begin{figure}[ht] 
  \begin{subfigure}[b]{0.5\linewidth}
    \centering
    \includegraphics[width=0.8\linewidth]{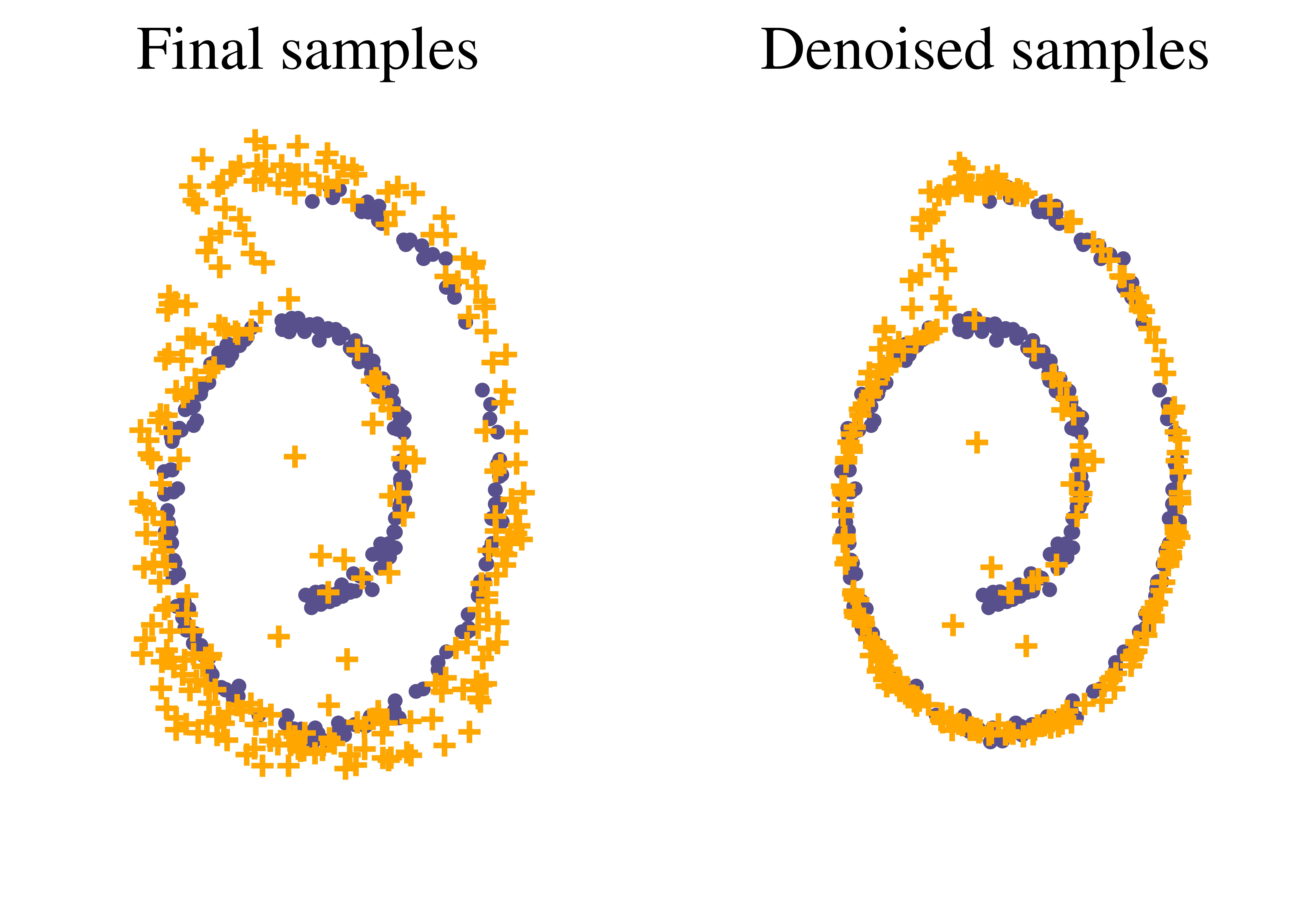} 
    \vspace{-0.5cm}
    \caption{Swiss Roll dataset} 
  \end{subfigure}
  \begin{subfigure}[b]{0.5\linewidth}
    \centering
    \includegraphics[width=0.8\linewidth]{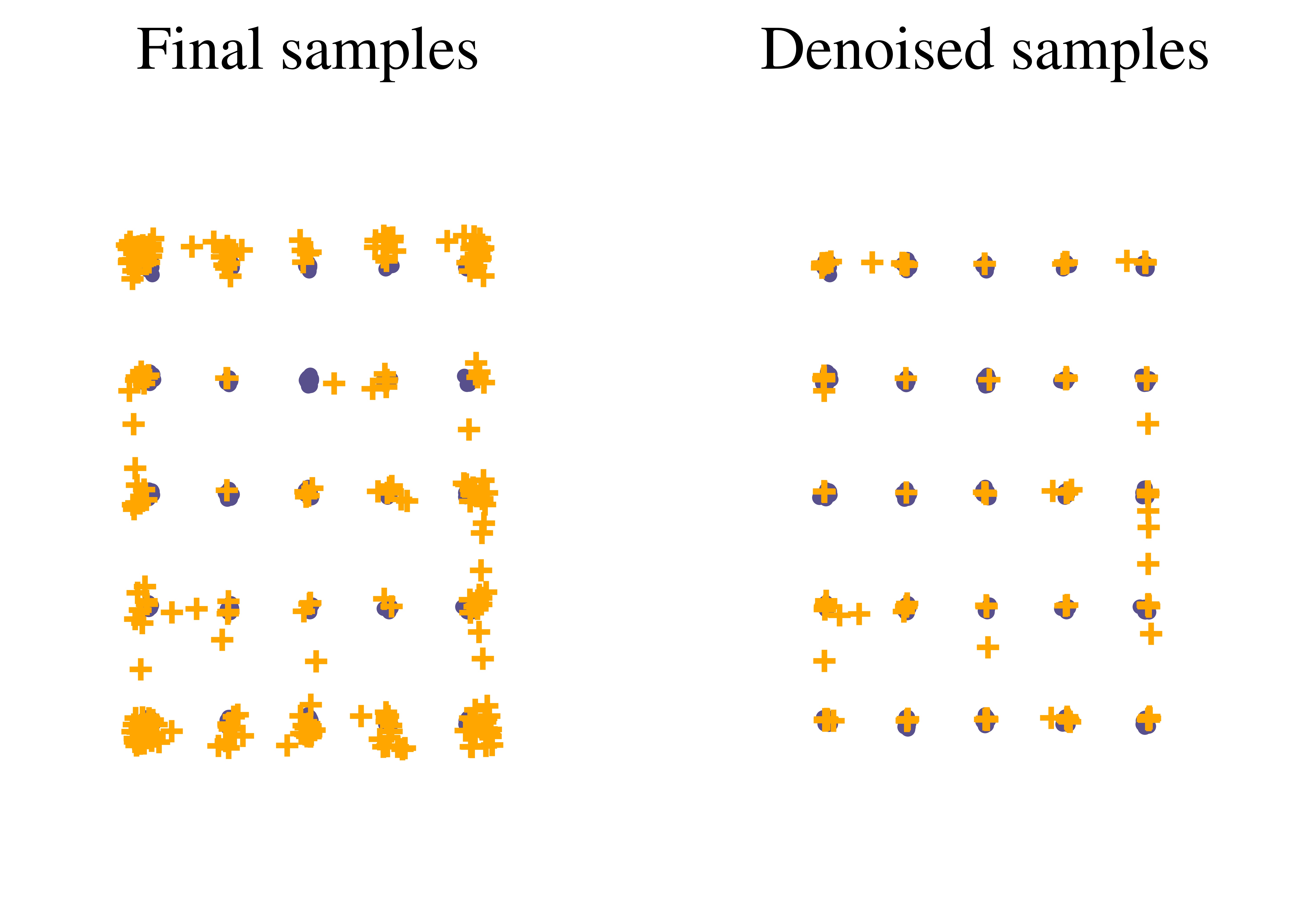}
    \vspace{-0.5cm}
    \caption{25 Gaussians dataset} 
  \end{subfigure}
  \caption{Langevin sampling on synthetic 2D experiments. Circles are real data points, crosses are generated data points. On both datasets, taking the EDS brings the samples much closer to the real data manifold.}
  \label{fig1} 
\end{figure}

\begin{figure}[h] 
    \centering
    \includegraphics[width=0.5\linewidth]{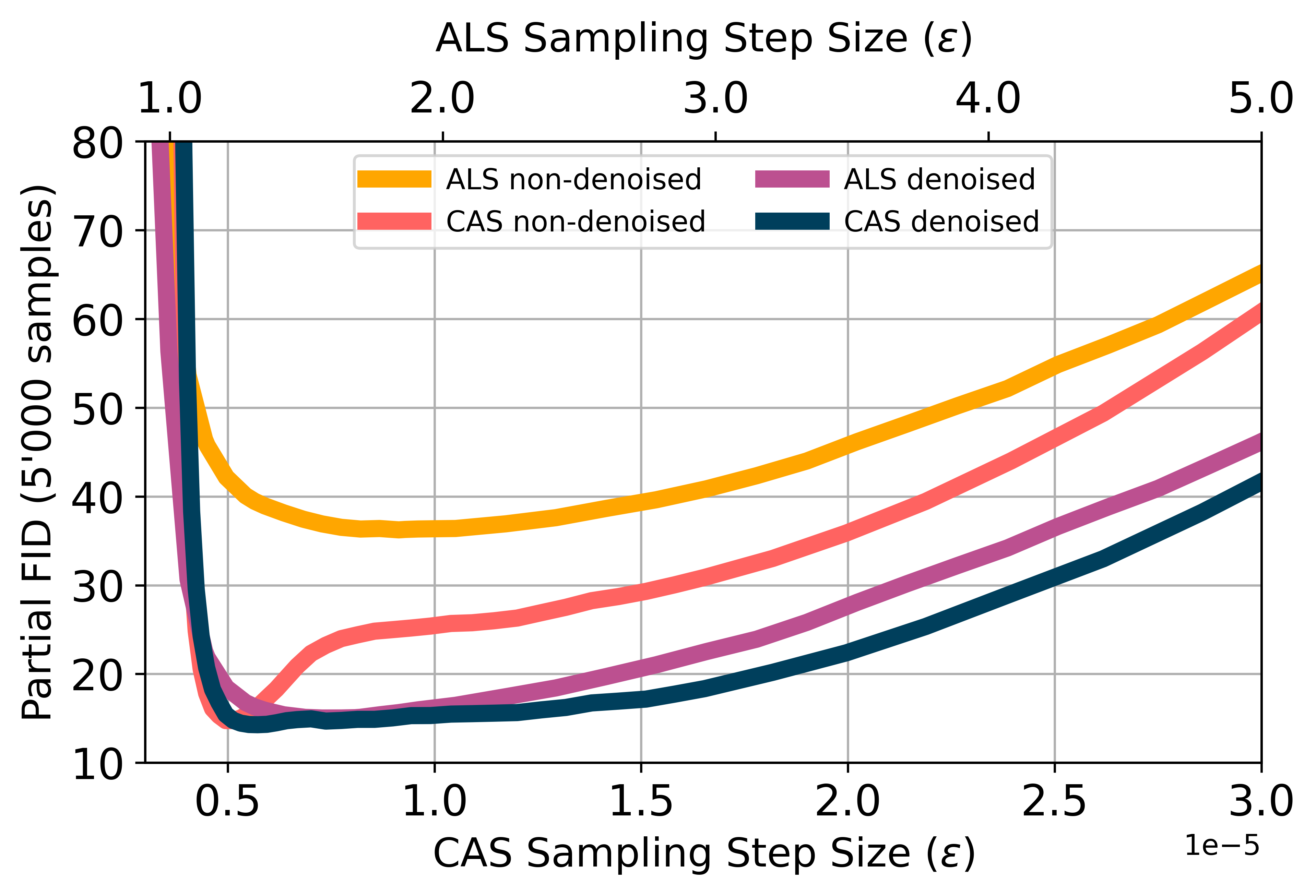} 
  
  \caption{Partial estimate of FID (lower is better) as a function of the sampling step size on CIFAR-10, with $n_\sigma = 1$. The interactions between \em consistent sampling \em and denoising are shown.
  }
  \label{lr_rate_cifar10} 
\end{figure}

We train a score network on CIFAR-10 \citep{krizhevsky2009learning} and report the FID from both ALS and CAS as a function of the sampling step size and of denoising in Figure \ref{lr_rate_cifar10}. The first observation to be made is just how critical denoising is to the FID score for ALS, even as its effect cannot be perceived by the human eye. For CAS, we note that the score remains small for a much wider range of sampling step sizes when denoising. Alternatively, the sampling step size must be very carefully tuned to obtain results close to the optimal. 

Figure \ref{lr_rate_cifar10} also shows that, with CAS, the FID of the final sample is approximately equal to the FID of the denoised samples for small sampling step sizes. Furthermore, we see a smaller gap in FID between denoised and non-denoised for larger sampling step sizes than ALS. This suggests that consistent sampling is resulting in the final sample being closer to the assumed data manifold (i.e., $\boldsymbol{x} \approx H_\theta(\boldsymbol{x}, \sigma_L)$).

Interestingly, when \citet{song2020improved} improved their score matching method, they could not explain why the FID of their new model did not improve even though the generated images looked better visually. To resolve that matter, they proposed the use of a new metric \citep{zhou2019hype} that did not have this issue. As shown in Figure \ref{lr_rate_cifar10}, denoising resolves this mismatch.

\section{Adversarial formulation}\label{sec:adversarial}

The score network is trained to recover an uncorrupted image from a noisy input minimizing the $l_2$ distance between the two. However, it is well known from the image restoration literature that $l_2$ does not correlate well with human perception of image quality \citep{zhang2012comprehensive,zhao2016loss}.
One way to take advantage of the EDS would be to encourage the score network to produce an EDS that is more realistic from the perspective of a discriminator. 
Intuitively, this would incentivize the score network to produce more discernible features at inference time.

We propose to do so by training the score network to simultaneously minimize the score-matching loss function and maximize the probability of denoised samples being perceived as real by a discriminator. We use alternating gradient descent to sequentially train a discriminator for a determined number of steps at every score function update. 

In our experiments, we selected the Least Squares GAN (LSGAN) \citep{LSGAN} formulation as it performed best (see Appendix \ref{app:exp_details} for details). For an unconditional score network, the objective functions are as follows:

\begin{align}\label{eqn:GANmax}
&\max_{\phi} ~ \mathbb{E}_{p(\boldsymbol{x})}\left[ (D_\phi(\boldsymbol{x}) - 1)^2 \right] + \mathbb{E}_{p(\boldsymbol{\tilde{x}}, \boldsymbol{x}, \sigma)} \left[ (D_\phi(H_{\theta}(\boldsymbol{\tilde{x}},\sigma)+1)^2\right]
\end{align}
\begin{align}\label{eqn:GANmin}
&\min_{\theta} ~ \mathbb{E}_{p(\boldsymbol{\tilde{x}}, 
\boldsymbol{x}, \sigma)}\left[ (D_\phi(H_{\theta}(\boldsymbol{\tilde{x}},\sigma))-1)^2 + \frac{\lambda}{2} \norm{\sigma s_{\theta}(\boldsymbol{\tilde{x}},\sigma) + \frac{\boldsymbol{\tilde{x}} - \boldsymbol{x}}{\sigma}}_2^2 \right],
\end{align}

where $H_{\theta}(\boldsymbol{\tilde{x}},\sigma) = s_{\theta}(\boldsymbol{\tilde{x}},\sigma) \sigma^2 + \boldsymbol{\tilde{x}}$ is the EDS derived from the score network.
Eq. \ref{eqn:GANmax} is the objective function of the LSGAN discriminator, while Eq. \ref{eqn:GANmin} is the adversarial objective function of the score network derived from Eq. \ref{eqn:score} and from the LSGAN objective function.

We note the similarities between these objective functions and those of an LSGAN adversarial autoencoder \citep{makhzani2015adversarial, tolstikhin2017wasserstein, tran2018dist}, with the distinction of using a denoising autoencoder $H$ as opposed to a standard autoencoder. 

As GANs favor quality over diversity, there is a concern that this hybrid objective function might decrease the diversity of samples produced by the ALS. In Section \ref{sec:quality}, we first study image generation improvements brought by this method and then address the diversity concerns with experiments on the 3-StackedMNIST \citep{metz2016unrolled} dataset in Section $\ref{sec:diversity}$.

\section{Experiments}\label{sec:experiments}

\subsection{Ablation Study}\label{sec:quality}

We ran experiments on CIFAR-10 
\citep{krizhevsky2009learning} and LSUN-churches 
\citep{yu2015lsun} with the score network architecture used by \citet{song2020improved}. We also ran similar experiments with an unconditional version of the network architecture by \citet{ho2020denoising}, given that their approach is similar to \citet{song2019generative} and they obtain very small FIDs.
For the hybrid adversarial score matching approach, we used an unconditional BigGAN discriminator \citep{brock2018large}.
We compared three factors in an ablation study: adversarial training, Consistent Annealed Sampling and denoising.

Details on how the experiments were conducted are found in Appendix \ref{app:exp_details}. Unsuccessful experiments with large images are also discussed in Appendix \ref{app:B}. See also Appendix \ref{sec:is} for a discussion pertaining to the use of the Inception Score \citep{heusel2017gans}, a popular metric for generative models.

Results for CIFAR-10 and LSUN-churches with \citet{song2019generative} score network architecture are respectively shown in Table \ref{tab:cifar10} and \ref{tab:lsun-churches}. Results for CIFAR-10 with \citet{ho2020denoising} score network architecture are shown in Table \ref{tab:cifar102}.

\begin{table}[!ht]
	\caption{[Non-denoised / Denoised FID] from 10k samples on CIFAR-10 (32x32) with \citet{song2019generative} score network architecture}
	\label{tab:cifar10}
	\centering
	\begin{tabular}{ccc}
		\toprule
		Sampling & Non-adversarial & Adversarial \\
		\cmidrule(){1-3}
	    non-consistent $(n_{\sigma} = 1)$ & 36.3 / 13.3 & 30.0 / 11.8 \\
	    non-consistent $(n_{\sigma} = 5)$ & 33.7 / 10.9 & 26.4 / {\fontseries{b}\selectfont 9.5} \\
	    \cmidrule(){1-3}
		consistent $(n_{\sigma} = 1)$ & 14.7 / 12.3 & 11.9 / 10.8 \\
		consistent $(n_{\sigma} = 5)$ & 12.7 / 11.2 & 9.9 / 9.7 \\
		\bottomrule
	\end{tabular}
\end{table}
\begin{table}[!ht]
	\caption{[Non-denoised / Denoised FID] from 10k samples on LSUN-Churches (64x64) with \citet{song2019generative} score network architecture}
	\label{tab:lsun-churches}
	\centering
	\begin{tabular}{ccc}
		\toprule
		Sampling & Non-adversarial & Adversarial \\
		\cmidrule(){1-3}
	    non-consistent $(n_{\sigma} = 1)$ & 43.2 / 40.3 & 40.9 / 36.7  \\
	    non-consistent $(n_{\sigma} = 5)$ & 42.0 / 39.2 & 40.0 / 35.8 \\
	    \cmidrule(){1-3}
	    consistent $(n_{\sigma} = 1)$ & 41.5 / 40.7 & 38.2 / 36.7 \\
		consistent $(n_{\sigma} = 5)$ & 39.5 / 39.1 & 36.3 / {\fontseries{b}\selectfont 35.4} \\
		\bottomrule
	\end{tabular}
\end{table}
\begin{table}[!ht]
	\caption{[Non-denoised / Denoised FID] from 10k samples on CIFAR-10 (32x32) with \citet{ho2020denoising} unconditional score network architecture}
	\label{tab:cifar102}
	\centering
	\begin{tabular}{ccc}
		\toprule
		Sampling & Non-adversarial & Adversarial \\
		\cmidrule(){1-3}
	    non-consistent $(n_{\sigma} = 1)$ & 25.3 / 7.5 & 21.6 / 7.5 \\
	    non-consistent $(n_{\sigma} = 5)$ & 20.0 / {\fontseries{b}\selectfont 5.6} & 17.7 / 6.1 \\
	    \cmidrule(){1-3}
		consistent $(n_{\sigma} = 1)$ & 7.8 / 7.1 & 7.7 / 7.1 \\
		consistent $(n_{\sigma} = 5)$ & 6.2 / 6.1 & 6.1 / 6.5 \\
		\bottomrule
	\end{tabular}
\end{table}

We always observe an improvement in FID from denoising and by increasing $n_\sigma$ from 1 to 5. 
We observe an improvement from using the adversarial approach with \citet{song2019generative} network architecture, but not on denoised samples with the \citet{ho2020denoising} network architecture. We hypothesize that this is a limitation of the architecture of the discriminator since, as far as we know, no variant of BigGAN achieves an FID smaller than 6. Nevertheless, it remains advantageous for more simple architectures, as shown in Table \ref{tab:cifar10} and \ref{tab:lsun-churches}. We observe that consistent sampling outperforms non-consistent sampling on the CIFAR-10 task at $n_\sigma = 1$, the quickest way to sample. 

We calculated the FID of the non-consistent denoised models from 50k samples in order to compare our method with the recent work from \citet{ho2020denoising}. We obtained a score of 3.65 for the non-adversarial method and 4.02 for the adversarial method on the CIFAR-10 task when sharing their architecture; these scores are close to their reported 3.17. Although not explicit in their approach, \citet{ho2020denoising} denoised their final sample. This suggests that taking the EDS and using an architecture akin to theirs were the two main reasons for outperforming \citet{song2020improved}.  Of note, our method only trains the score network for 300k iterations, while \citet{ho2020denoising} trained their networks for more than 1 million iterations to achieve similar results.

\subsection{Non-adversarial and Adversarial score networks have equally high diversity}\label{sec:diversity}

To assess the diversity of generated samples, we evaluate our models on the 3-Stacked MNIST generation task \citep{metz2016unrolled} (128k images of 28x28), consisting of numbers from the MNIST dataset \citep{lecun1998gradient} superimposed on 3 different channels. We trained non-adversarial and adversarial score networks in the same way as the other models. The results are shown in Table \ref{fig:stacked}. 

We see that each of the 1000 modes is covered, though the KL divergence is still inferior to PACGAN \citep{lin2018pacgan}, meaning that the mode proportions are not perfectly uniform. Blindness to mode proportions is thought to be a fundamental limitation of score-based methods \citep{wenliang2020blindness}. Nevertheless, these results confirm a full mode coverage on a task where most GANs struggle and, most importantly, that using a hybrid objective does not hurt the diversity of the generated samples.

\begin{table}[ht]
	\begin{center}
  	\begin{tabular}{ l  r  r    }
  		\hline
		\multicolumn{2}{c}{3-Stacked MNIST}  \\ \cline{2-3} 
    		 & Modes (Max 1000)& KL \\ \hline 
    		DCGAN \citep{DCGAN} 	& 99.0 & 3.40 		\\
    		ALI  \citep{dumoulin2016adversarially} & 16.0 &  5.40 		\\
    		Unrolled GAN \citep{metz2016unrolled} & 48.7 & 4.32 		\\
    		VEEGAN \citep{srivastava2017veegan} 		& 150.0 &  2.95		\\
		    PacDCGAN2 \citep{pacgan} 	& 1000.0 & 0.06  \\
		    WGAN-GP \citep{kumar2019maximum,WGAN-GP} & 959.0 & 0.73 \\ 
		    PresGAN \citep{dieng2019prescribed} & 999.6 & 0.115 \\
		    MEG \citep{kumar2019maximum} & 1000.0 & 0.03 \\
		    \hline
		    Non-adversarial DSM (ours) 	& 1000.0 & 1.36 \\
		    Adversarial DSM (ours) 	& 1000.0 & 1.49 \\
    		\hline
  	\end{tabular}
	\end{center}
	\caption{As in \citet{lin2018pacgan}, we generated 26k samples and evaluated the mode coverage and KL divergence based on the predicted modes from a pre-trained MNIST classifier.
	}
	\label{fig:stacked}
\end{table}

\section{Conclusion}

We proposed Consistent Annealed Sampling as an alternative to Annealed Langevin Sampling, which ensures the expected geometric progression of the noise and brings the final samples closer to the data manifold. We showed how to extract the expected denoised sample and how to use it to further improve the final Langevin samples. We proposed a hybrid approach between GAN and score matching. With experiments on synthetic and standard image datasets; we showed that these approaches generally improved the quality/diversity of the generated samples.

We found equal diversity (coverage of all 1000 modes) for the adversarial and non-adversarial variant of the difficult StackedMNIST problem. Since we also observed better performance (from lower FIDs) in our other adversarial models trained on images, we conclude that making score matching adversarial increases the quality of the samples without decreasing diversity. 
These findings imply that score matching performs better than most GANs and on-par with state-of-the-art GANs. Furthermore, our results suggest that hybrid methods, combining multiple generative techniques together, are a very promising direction to pursue.

As future work, these models should be scaled to larger batch sizes on high-resolution images, since GANs have been shown to produce outstanding high-resolution images at very large batch sizes (2048 or more). We also plan to further study the theoretical properties of CAS by considering its corresponding stochastic differential equation. 

\bibliographystyle{unsrtnat}
\bibliography{paper}

\clearpage

\appendix
\section*{Appendices}

\section{Broader Impact}\label{app:broader_impact}

Unfortunately, these improvements in image generation come at a very high computational cost, meaning that the ability to generate high-resolution images is constrained by the availability of large computing resources (TPUs or clusters of 8+ GPUs). This is mainly due to the architectures used in this paper, while adding a discriminator further adds to the training computational load.

\section{Experiments details}
\label{app:exp_details}

\begin{table}[H]
    \caption{Sampling learning rates for non-adversarial ($\epsilon$) and adversarial ($\epsilon_{adv}$) score matching}\label{tab:hyperparameters}
    \begin{center}
    \begin{tabular}{cccc|cc}
    \toprule
    Network architecture & Dataset & consistent & $n_\sigma$ & $\epsilon$ & $\epsilon_{adv}$ \\ 
    \midrule
    \citet{song2020improved} & CIFAR-10 & No & 1 & 1.8e-5 & 1.725e-5\\
    \citet{song2020improved} & CIFAR-10 & No & 5 & 3.6e-6 & 3.7e-6\\
    \citet{song2020improved} & CIFAR-10 & Yes & 1 & 5.6e-6 & 5.55e-6\\
    \citet{song2020improved} & CIFAR-10 & Yes & 5 & 1.1e-6 & 1.05e-6\\
    \hline
    \citet{song2020improved} & LSUN-Churches & No & 1 & 4.85e-6 & 4.85e-6\\
    \citet{song2020improved} & LSUN-Churches & No & 5 & 9.7e-7 & 9.7e-7\\
    \citet{song2020improved} & LSUN-Churches & Yes & 1 & 2.8e-6 & 2.8e-6\\
    \citet{song2020improved} & LSUN-Churches & Yes & 5 & 4.5e-7 & 4.5e-7\\
    \hline
    \citet{ho2020denoising} & CIFAR-10 & No & 1 & 1.6e-5 & 1.66e-05\\
    \citet{ho2020denoising} & CIFAR-10 & No & 5 & 4.0e-6 & 4.25e-6\\
    \citet{ho2020denoising} & CIFAR-10 & Yes & 1 & 5.45e-6 & 5.6e-6\\
    \citet{ho2020denoising} & CIFAR-10 & Yes & 5 & 1.05e-6 & 1e-6\\
    \hline
    \citet{song2020improved} & 3-StackedMNIST & Yes & 1 & 5.0e-6 & 5.0e-6\\
    \bottomrule
    \end{tabular}
    \end{center}
\end{table}

Following the recommendations from \citet{song2020improved}, we chose $\sigma_1=50$ and $L=232$ on CIFAR-10, and $\sigma_1=140$, $L=788$ on LSUN-Churches, both with $\sigma_L = 0.01$. We used a batch size of 128 in all models. We first swept summarily the training checkpoint (saved at every 2.5k iterations), the Exponential Moving Average (EMA) coefficient from \{.999, .9999\}, and then swept over the sampling step size $\epsilon$ with approximately 2 significant number precision. The values reported in Table \ref{tab:cifar10} correspond to the sampling step size that minimized the denoised FID for every $n_\sigma$ (See Table \ref{tab:hyperparameters}). We used the same sampling step sizes for adversarial and non-adversarial. Empirically, the optimal sampling step size is found for a certain $n_\sigma $ and is extrapolated to other precision levels by solving $\beta_{n_\sigma'} = \beta_{n_\sigma} \sqrt{n_\sigma/n_\sigma'}$ for the consistent algorithm. In the non-consistent algorithm, we found the best sampling step size at $n_\sigma = 1$ and divided by 5 to obtain a starting point to find the optimal value at $n_\sigma = 5$. The best EMA values were found to be .9999 in CIFAR-10 and .999 in LSUN-churches. The number of score network training iterations was 300k on CIFAR-10 and 200k on LSUN-Churches.

Of note, \citet{song2020improved} used non-denoised non-consistent sampling with $n_\sigma=5$ and $n_\sigma=4$ for CIFAR-10 and LSUN-Churches respectively. However, they did not use the same learning rates (we tuned ours more precisely) and they used bigger images for LSUN-Churches.

Regarding the adversarial approach, we swept for the GAN loss function, the number of discriminator steps per score network steps ($n_D \in \{1,2\}$), the Adam optimizer \citep{Adam} parameters, and the hyperparameter $\lambda$ (see Eq. \ref{eqn:GANmin}) based on quick experiments on CIFAR-10. LSGAN \citep{LSGAN} yielded the best FID scores among all other loss functions considered, namely the original GAN loss function \citep{GAN}, HingeGAN \citep{lim2017geometric}, as well as their relativistic counterparts \citep{jolicoeur2018relativistic,jolicoeur2019relativistic}. Note that the saturating variant (see \citet{GAN}) on LSGAN worked as well as its non-saturating version; we did not use it for simplicity. Following the trend towards zero or negative momentum \citep{gidel2019negative}, we used the Adam optimizer with hyperparameters $(\beta_1,\beta_2)=(-.5,.9)$ for the discriminator and $(\beta_1,\beta_2)=(0,.9)$ for the score network. These values were found by sweeping over $\beta_1 \in \{.9,.5,0,-.5\}$ and $\beta_2 \in \{.9, .999\}$. We found the simple setting $\lambda=1$ to perform comparatively better than more complex weighting schemes. We used $n_D=2$ on CIFAR-10 and and $n_D=1$ on LSUN-Churches.

The 3-Stacked MNIST experiment was conducted with an arbitrary EMA of .999. The sampling step size was broadly swept upon. Following \citep{song2020improved} hyperparameter recommendations, we obtained $\sigma_1=50$, $L=200$.

For the synthetic 2D experiments, we used Langevin sampling with $n_\sigma=10$, $\epsilon = .2$, and $\sigma=.1$.

\section{Supplementary experiments}
\label{app:B}

Due to limited computing resources (4 V100 GPUs), the training of models on FFHQ (70k images) in 256x256 \citep{karras2019style}, with the same setting as previously done by \citet{song2020improved}, was impossible. Using a reduced model yielded very poor results. The adversarial version performed worse than the other; we suspect this was the case due to the mini-batch of size 32, our computational limit, while the BigGAN architecture normally assumes very large batch sizes of 2048 when working with images of that size (256x256 or higher). 

\section{Optimal conditional score function}
\label{app:C}

Recall the loss from Eq. \ref{eqn:score}
\[\mathcal{L}[s] = \frac{1}{2L}\sum_{i=1}^L \mathbb{E}_{\boldsymbol{\tilde{x}} \sim q_{\sigma_i}(\boldsymbol{\tilde{x}}|\boldsymbol{x}), \boldsymbol{x} \sim p(\boldsymbol{x})}\left[ \norm{ s(\boldsymbol{\tilde{x}}, \sigma_i) - \nabla_{\boldsymbol{\tilde{x}}} \log q_{\sigma_i}(\boldsymbol{\tilde{x}}) }_2^2 \right] \]
Then, the minimizer $s^*$ of $\mathcal{L}[s]$ would be such that, $s^{*}(\boldsymbol{\tilde{x}}, \sigma_i) = \nabla_{\boldsymbol{\tilde{x}}} \log q_{\sigma_i}(\boldsymbol{\tilde{x}})$ ~ $\forall i \in \{1, ..., L\}$.

\begin{align*}
     \nabla_{\boldsymbol{\tilde{x}}} \log q_\sigma(\boldsymbol{\tilde{x}}) &= \frac{1}{q_\sigma(\boldsymbol{\tilde{x}})} \nabla_{\boldsymbol{\tilde{x}}} \int p(\boldsymbol{x}) q_\sigma(\boldsymbol{\tilde{x}} | \boldsymbol{x}) d\boldsymbol{x} \\
    &= \frac{1}{q_\sigma(\boldsymbol{\tilde{x}})} \int p(\boldsymbol{x}) \nabla_{\boldsymbol{\tilde{x}}} q_\sigma(\boldsymbol{\tilde{x}} | \boldsymbol{x}) d\boldsymbol{x} \\
    &= \frac{1}{q_\sigma(\boldsymbol{\tilde{x}})} \int p(\boldsymbol{x}) q_{\sigma}(\boldsymbol{\tilde{x}}|\boldsymbol{x}) \nabla_{\boldsymbol{\tilde{x}}} \log q_{\sigma}(\boldsymbol{\tilde{x}}|\boldsymbol{x}) d\boldsymbol{x} \\ &= \frac{1}{q_\sigma(\boldsymbol{\tilde{x}})} \int p(\boldsymbol{x}) q_{\sigma}(\boldsymbol{\tilde{x}}|\boldsymbol{x}) \frac{\boldsymbol{x}-\boldsymbol{\tilde{x}}}{\sigma^2} d\boldsymbol{x} \\
    &= \frac{1}{q_\sigma(\boldsymbol{\tilde{x}})} \frac{1}{\sigma^2} \left(\int q_\sigma(\boldsymbol{x} | \boldsymbol{\tilde{x}}) q_\sigma(\boldsymbol{\tilde{x}}) \boldsymbol{x} d\boldsymbol{x} - \boldsymbol{\tilde{x}} q_\sigma(\boldsymbol{\tilde{x}})\right) \\
    &= \frac{\mathbb{E}_{\boldsymbol{x} \sim q_\sigma(\boldsymbol{x} | \boldsymbol{\tilde{x}})}[\boldsymbol{x}] - \boldsymbol{\tilde{x}}}{\sigma^2}
\end{align*}

making use of the fact that  $q_\sigma(\boldsymbol{\boldsymbol{\tilde{x}}}| \boldsymbol{x}) = \mathcal{N}(\boldsymbol{\boldsymbol{\tilde{x}}} | \boldsymbol{x}, \sigma^2 \boldsymbol{I})$, $q_\sigma(\boldsymbol{\boldsymbol{\tilde{x}}}) \triangleq \int p(\boldsymbol{x})q_\sigma(\boldsymbol{\boldsymbol{\tilde{x}}}|\boldsymbol{x})d\boldsymbol{x}$ and that 
$q_\sigma(\boldsymbol{x} | \boldsymbol{\boldsymbol{\tilde{x}}}) = \frac{p(\boldsymbol{x})q_\sigma(\boldsymbol{\boldsymbol{\tilde{x}}}|\boldsymbol{x})}{q_\sigma(\boldsymbol{\boldsymbol{\tilde{x}}})}$

\section{Optimal unconditional score function}
\label{app:D}

As the explicit optimal score function is obtained in Appendix \ref{app:C} for the conditional case, a similar result can be obtained for the unconditional case. Recall the loss from Eq. \ref{eqn:score}

\begin{align}
\mathcal{L}[s] &= \frac{1}{2L} \sum_{i=1}^L  \mathbb{E}_{\boldsymbol{\tilde{x}} \sim q_{\sigma_i}(\boldsymbol{\tilde{x}}|\boldsymbol{x}), \boldsymbol{x} \sim p(\boldsymbol{x})}\left[ \norm{s(\boldsymbol{\tilde{x}}) + \frac{(\boldsymbol{\tilde{x}} - \boldsymbol{x})}{\sigma_i} }_2^2 \right] \\
&= \frac{1}{2} \mathbb{E}_{\boldsymbol{\tilde{x}} \sim q_{\sigma}(\boldsymbol{\tilde{x}}|\boldsymbol{x}), \boldsymbol{x} \sim p(\boldsymbol{x}), \sigma \sim p(\sigma)}\left[ \norm{s(\boldsymbol{\tilde{x}}) + \frac{(\boldsymbol{\tilde{x}} - \boldsymbol{x})}{\sigma} }_2^2 \right],
\end{align}
where $p(\sigma)$ is chosen to be a discrete uniform distribution over a specific set of values. We use this expectation formulation over $\sigma$ to obtain a more general result; the choice of $p(\sigma)$ is not important for this derivation.

Solving with calculus of variations, we get:
\begin{align*} \frac{\partial \mathcal{L}}{\partial s} = &\int \int q_{\sigma}(\boldsymbol{\tilde{x}},\boldsymbol{x},\sigma)\left(s(\boldsymbol{\tilde{x}}) + \frac{\boldsymbol{\tilde{x}} - \boldsymbol{x}}{\sigma}\right)d\boldsymbol{x} d\sigma= 0 \\
\iff &s(\boldsymbol{\tilde{x}})q(\boldsymbol{\tilde{x}}) = \int \int q_{\sigma}(\boldsymbol{\tilde{x}},\boldsymbol{x})p(\sigma) \left(\frac{\boldsymbol{x} - \boldsymbol{\tilde{x}}}{\sigma}\right) d\boldsymbol{x} d\sigma \\
\iff &s(\boldsymbol{\tilde{x}})q(\boldsymbol{\tilde{x}}) = \mathbb{E}_{\sigma \sim p(\sigma)} \left[ \int q_{\sigma}(\boldsymbol{x}|\boldsymbol{\tilde{x}})q(\boldsymbol{\tilde{x}}) \left(\frac{\boldsymbol{x} - \boldsymbol{\tilde{x}}}{\sigma}\right) d\boldsymbol{x} \right] \\
\iff &s(\boldsymbol{\tilde{x}}) = \mathbb{E}_{\sigma \sim p(\sigma)} \left[ \int q_{\sigma}(\boldsymbol{x}|\boldsymbol{\tilde{x}}) \left(\frac{\boldsymbol{x} - \boldsymbol{\tilde{x}}}{\sigma}\right) d\boldsymbol{x} \right] \\
\iff &s(\boldsymbol{\tilde{x}}) = \mathbb{E}_{\sigma \sim p(\sigma)} \left[  \frac{\mathbb{E}_{\boldsymbol{x} \sim q_{\sigma}(\boldsymbol{x} | \boldsymbol{\tilde{x}})}[\boldsymbol{x}] - \boldsymbol{\tilde{x}}}{\sigma} \right] \\
\iff &s(\boldsymbol{\tilde{x}}) = \mathbb{E}_{\sigma \sim p(\sigma)} \left[ s^{*}(\boldsymbol{\tilde{x}}, \sigma) \right]
\end{align*}

Specifically, for our discrete choice of noise levels, we have that the critical point is achieved when $$ s(\boldsymbol{\tilde{x}}) = \frac{1}{L} \sum_{i=1} ^L \left(  \frac{\mathbb{E}_{\boldsymbol{x} \sim q_{\sigma_i}(\boldsymbol{x} | \boldsymbol{\tilde{x}})}[\boldsymbol{x}] - \boldsymbol{\tilde{x}}}{\sigma_i} \right). $$

Let us now consider the scenario where the data distribution is simply a Dirac in $\boldsymbol{x_0}$: $p(\boldsymbol{x}) = \delta_{\boldsymbol{x_0}}$. In that case, the EDS is trivial: $H^*(\boldsymbol{\tilde{x}},\sigma) \triangleq \mathbb{E}_{\boldsymbol{x} \sim q_\sigma(\boldsymbol{x} | \boldsymbol{\tilde{x}})}[\boldsymbol{x}] = \boldsymbol{x_0}$. This gives an interesting insight into the unconditional score network. Letting $\frac{1}{\sigma} = \frac{1}{L} \sum_{i=1}^L \frac{1}{\sigma_i}$, we get:
\begin{align*}
    s(\boldsymbol{\tilde{x}}) &= \frac{\boldsymbol{x}_0 - \boldsymbol{\tilde{x}}}{\sigma} \implies s(\boldsymbol{\tilde{x}}, \sigma_i) = \frac{s(\boldsymbol{\tilde{x}})}{\sigma_i} = \frac{\boldsymbol{x}_0 - \boldsymbol{\tilde{x}}}{\sigma \sigma_i}.
\end{align*}
This should be compared to the true value of the conditional network $s(\boldsymbol{\tilde{x}}, \sigma_i) = \frac{\boldsymbol{x}_0 - \boldsymbol{\tilde{x}}}{\sigma_i^2}$. We see that in the denominator of the unconditional score network, a noise level is replaced by the harmonic mean of noise levels. In particular, for $\sigma_i$ far from its mean, the approximation will get inaccurate:
\begin{itemize}
    \item For large noise values, the unconditional score network will overestimate the true score function, leading to a larger effective step size during sampling.
    \item For small noise values, the unconditional score network will underestimate the true score function, leading to samples not diffusing as much as they should.
\end{itemize}

\section{ALS Non-geometric proof}\label{app:als_bad_proof}

\addtocounter{proposition}{-3}
\begin{proposition}
\em 
Let $s^*$ be the optimal score function from Eq. \ref{eqn:eds}. Following the sampling described in Algorithm \ref{alg:anneal1}, the variance of the noise component in the sample $\boldsymbol{x}$ will remain greater than $\sigma_t^2$ at every step $t$.\em 
\end{proposition}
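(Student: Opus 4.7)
The plan is to mirror the computation used in the proof of Proposition~\ref{proof:2}, but applied to Algorithm~\ref{alg:anneal1}. First I would substitute the optimal score $s^*$ from Eq.~\ref{eqn:eds} into the ALS update. With $\alpha_i = \eta\sigma_i^2$ (using $\eta = \epsilon/\sigma_L^2$), this rewrites the step as
\[
    \boldsymbol{x}_{t+1} = (1-\eta)\boldsymbol{x}_t + \eta\, H^*(\boldsymbol{x}_t,\sigma_i) + \sqrt{2\eta}\,\sigma_i\boldsymbol{z},
\]
which is the ALS analogue of the expression used in the CAS proof, the only difference being the magnitude of the injected noise ($\sqrt{2\eta}\sigma_i$ instead of $\beta\sigma_{i+1}$). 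As in the proof of Proposition~\ref{proof:2}, I would decompose $\boldsymbol{x}_t$ into an image component (absorbed into $H^*$) and a noise component $\sigma'_t \boldsymbol{z}_t$ with $\boldsymbol{z}_t\sim\mathcal{N}(0,\boldsymbol{I})$, and treat these as independent so that variances add.

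Next I would track the recurrence on the noise variance. Combining the contribution $(1-\eta)\sigma'_t\boldsymbol{z}_t$ from the shrunk sample with the freshly injected $\sqrt{2\eta}\sigma_i\boldsymbol{z}$ gives
\[
    (\sigma'_{t+1})^2 = (1-\eta)^2 (\sigma'_t)^2 + 2\eta\,\sigma_i^2.
\]
This is an affine contraction whose unique fixed point is $v^{*2} = \sigma_i^2/(1-\eta/2)$, which is strictly larger than $\sigma_i^2$ for any $\eta\in(0,1)$. From this I would deduce monotone convergence toward $v^{*2}$, and in particular that the map $u \mapsto (1-\eta)^2 u + 2\eta\sigma_i^2$ sends $[\sigma_i^2,\infty)$ into $[\sigma_i^2(1+\eta^2),\infty) \subset (\sigma_i^2,\infty)$. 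Therefore, once the noise variance reaches or exceeds $\sigma_i^2$, it strictly exceeds $\sigma_i^2$ after every subsequent inner step of Langevin sampling at noise level $\sigma_i$.

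Finally I would close the induction across the outer loop. Initialization with pure noise of variance $\sigma_0^2 \geq \sigma_1^2$ handles the base case. For the step: when the inner loop for level $\sigma_i$ ends, the noise variance is $\geq \sigma_i^2 > \sigma_{i+1}^2$, and this value is exactly the input variance at the first inner step of level $\sigma_{i+1}$, so the inductive hypothesis transfers across noise levels. Combining this with the per-level bound proved in the previous paragraph gives $(\sigma'_t)^2 > \sigma_i^2$ throughout the entire algorithm (except possibly at the single initial point, if one initializes exactly with variance $\sigma_1^2$). The main subtlety, rather than an obstacle, is the same idealization used in the CAS proof: treating $H^*(\boldsymbol{x}_t,\sigma_i)$ as contributing only to the signal component and treating the accumulated noise as Gaussian and independent of the signal across iterations. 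Since this is precisely the level of rigor adopted in Proposition~\ref{proof:2}, it is consistent to import it here, and the conclusion ``the variance is at least $\sigma_i^2$, in fact strictly greater after the first inner step'' follows.
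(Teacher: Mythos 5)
Your proposal is correct and follows essentially the same route as the paper's proof in Appendix \ref{app:als_bad_proof}: the same signal/noise decomposition yields the same variance recurrence $v_{n+1}^2=(1-\eta)^2v_n^2+2\eta\sigma_t^2$, whose fixed point $\tfrac{2}{2-\eta}\sigma_t^2>\sigma_t^2$ you identify exactly as the paper does. The only minor difference is in how you close: the paper solves the recurrence in closed form and argues via the limit together with monotone decrease ``under conditions generally respected in practice,'' whereas you observe that the one-step map sends $[\sigma_t^2,\infty)$ strictly into $(\sigma_t^2,\infty)$ and make the induction across outer-loop noise levels explicit --- a marginally tighter finish that dispenses with the monotonicity caveat.
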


\begin{proof}
Assume at the start of an iteration of Langevin Sampling that the point $\boldsymbol{x}$ is comprised of an \em image component \em and of a \em noise component \em denoted $v_0 \boldsymbol{z}_0$ for $\boldsymbol{z}_0 \sim \mathcal{N}(0, \boldsymbol{I})$. We assume from the proposition statement that the Langevin Sampling is performed at the level of variance $\sigma_t^2$, meaning the update rule is as follows:
\vspace{-0.1cm}
\begin{align*}
    \boldsymbol{x} \gets & \boldsymbol{x} + \eta \sigma_t^2 s^*(\boldsymbol{x}, \sigma_t) + \sigma_t\sqrt{2\eta}\boldsymbol{z} \\
\intertext{ for $\boldsymbol{z} \sim \mathcal{N}(0, \boldsymbol{I}) \text{ and } 0 < \eta < 1$. From Eq. \ref{eqn:eds}, we get:}
\boldsymbol{x} \gets & \boldsymbol{x} + \eta (\mathbb{E}_{\boldsymbol{x'}\sim q_{v_0}(\boldsymbol{x'}|\boldsymbol{x})}[\boldsymbol{x'}] - \boldsymbol{x}) + \sigma_t\sqrt{2\eta}\boldsymbol{z} \\
= & (1-\eta)\boldsymbol{x} + \eta \mathbb{E}_{\boldsymbol{x'}\sim q_{v_0}(\boldsymbol{x'}|\boldsymbol{x})}[\boldsymbol{x'}] + \sigma_t\sqrt{2\eta}\boldsymbol{z}.
\intertext{The noise component of $(1-\eta)\boldsymbol{x}$ and $\sigma_t\sqrt{2\eta}\boldsymbol{z}$ can then be summed as:}
& (1 - \eta)v_0\boldsymbol{z}_0 + \sigma_t \sqrt{2\eta} \boldsymbol{z}.
\intertext{
Making use of the fact that both sources of noise follow independent normal distributions, the sum will be normally distributed, centered and of variance $v_1^2$, with}
    v_1^2 = & v_0^2(1-\eta)^2 + 2\eta \sigma_t^2.
\intertext{
Applying the same steps allows us to examine the variance after multiple Langevin Sampling iterations}
v_2^2 = & v_1^2 (1-\eta)^2 + 2\eta \sigma_t^2 \\
    = & v_0^2(1-\eta)^4 + 2\eta \sigma_t^2 + 2\eta \sigma_t^2 (1-\eta)^2 \\
    \vdots  \\
    v_n^2 = & v_0^2(1-\eta)^{2n}+ 2\eta \sigma_t^2\sum_{i=0}^{n-1}(1 - \eta)^{2i} \\ 
    = & v_0^2(1-\eta)^{2n}+\frac{2\sigma_t^2}{2-\eta}(1 - (1 -\eta)^{2n}).
\intertext{
From there, the two following statements can be obtained:}
    (1) ~~& \lim_{n \to \infty} v_n = \sigma_t\sqrt{\frac{2}{2-\eta}} > \sigma_t \\
    (2) ~~& \frac{dv_n}{dn} < 0 \iff \eta < 2 - \frac{2\sigma_t^2}{v_0^2}.
\end{align*}
From these observations, we understand that $v_n^2$ is monotonically decreasing (under conditions generally respected in practice) but converges to a point superior to $\sigma_t^2$ after an infinite number of Langevin Sampling steps. 
We then conclude that for all $n$, the variance of the noise component in the sample will always exceed $\sigma_t^2$. We also note that this will be true across the full sampling, at every step $t$.

In the particular case where $\sigma_t$ corresponds to a geometrically decreasing series, it means that even given an optimal score function, the standard deviation of the noise component cannot follow its prescribed schedule.

\end{proof}

\section{CAS Geometric proof}\label{app:cas_good_proof}

\begin{proposition}
\em Let $s^*$ be the optimal score function from Eq. \ref{eqn:eds}. Following the sampling described in Algorithm \ref{alg:anneal2}, the variance of the noise component in the sample $\boldsymbol{x}$ will consistently be equal to $\sigma_t^2$ at every step $t$. \em 
\end{proposition}

\begin{proof}
Let us first define $\beta$ to be equal to $\sqrt{1 - (1-\eta)^2/\gamma^2}$ with $\eta = \epsilon / \sigma_L^2$ and $\gamma$ defined as Eq. \ref{sigma_set}.
At the start of Algorithm \ref{alg:anneal2}, assume that $\boldsymbol{x}$ is comprised of an \em image component \em and a \em noise component \em denoted $\sigma_0 \boldsymbol{z}_0$, where $\boldsymbol{z}_0 \sim \mathcal{N}(0, \boldsymbol{I})$ and $\sigma_0 = \sigma_1 / \gamma$. We will proceed by induction to show that the noise component at step $t$ will be a Gaussian of variance $\sigma_t^2$ for every $t$.
\vspace{-0.1cm}
\begin{align*}
\intertext{The first induction step is trivial. Assume the noise component of $\boldsymbol{x}_t$ to be $\sigma_t \boldsymbol{z}_t$, where $\boldsymbol{z}_t \sim \mathcal{N}(0, \boldsymbol{I})$. Following Algorithm \ref{alg:anneal2}, the update step will be:}
    & \boldsymbol{x}_{t+1} \gets \boldsymbol{x}_t + \eta \sigma_{t}^2 s^*(\boldsymbol{x}_t, \sigma_t) + \sigma_{t+1}\beta\boldsymbol{z},
\intertext{with $\boldsymbol{z} \sim \mathcal{N}(0, \boldsymbol{I}) \text{ and } 0 < \eta < 1$. From Eq. \ref{eqn:eds}, we get}
\boldsymbol{x}_{t+1} \gets & \boldsymbol{x}_t + \eta( \mathbb{E}_{\boldsymbol{x}\sim q_{\sigma_t}(\boldsymbol{x}|\boldsymbol{x}_t)}[\boldsymbol{x}] - \boldsymbol{x}_t) + \sigma_{t+1}\beta\boldsymbol{z} \\
= & (1-\eta)\boldsymbol{x}_t + \eta \mathbb{E}_{\boldsymbol{x}\sim q_{\sigma_t}(\boldsymbol{x}|\boldsymbol{x}_t)}[\boldsymbol{x}] + \sigma_{t+1}\beta\boldsymbol{z}.
\intertext{The noise component from $(1-\eta)\boldsymbol{x}_t$ and $\sigma_{t+1}\beta\boldsymbol{z}$ can then be summed as:}
& \sigma_t(1 - \eta)\boldsymbol{z}_t + \sigma_{t+1}\beta\boldsymbol{z}.
\intertext{Making use of the fact that both sources of noise follow independent normal distributions, the sum will be normally distributed, centered and of variance:}
& \sigma_t^2(1-\eta)^2 + \sigma_{t+1}^2\beta^2
= \sigma_{t+1}^2\left[ \left(\frac{1-\eta}{\gamma}\right)^2 + \beta^2 \right]
= \sigma_{t+1}^2.
\end{align*}
By induction, the noise component of the sample $\boldsymbol{x}$ will follow a Gaussian distribution of variance $\sigma_i^2 ~~ \forall i \in \{0, ..., L\}$. In the particular case where $\sigma_i$ corresponds to a geometrically decreasing series, it means that given an optimal score function, the standard deviation of the noise component will follow its prescribed schedule.
\end{proof}

\section{Update rule}
\label{app:update_rule_proof}

\begin{proposition}\label{prop:ff}
\em Given a noise-conditional score function, the update rules from Algorithm \ref{alg:anneal1} and Algorithm \ref{alg:anneal2} are equivalent to the respective following update rules: \em 
\begin{align*} &\boldsymbol{x} \gets (1-\eta)\boldsymbol{x} + \eta H(\boldsymbol{x},\sigma_i) + \sqrt{2\eta}\sigma_i \boldsymbol{z}  && \text{for } \boldsymbol{z} \sim \mathcal{N}(0, \boldsymbol{I}) ~~ \text{and} ~~ \eta = \frac{\epsilon}{\sigma_L^2} \\
&\boldsymbol{x} \gets (1-\eta)\boldsymbol{x} + \eta H(\boldsymbol{x},\sigma_i) + \beta \sigma_{i+1} \boldsymbol{z}
\end{align*}
\end{proposition}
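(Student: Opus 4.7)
The plan is to observe that the proposition is essentially a rewriting of the two sampling updates using the Empirical Bayes identity from Eq.~\ref{eqn:eds}, so the proof reduces to a short algebraic substitution. I would set things up by first recording the key identity: since $s^{*}(\boldsymbol{\tilde{x}},\sigma) = (H(\boldsymbol{\tilde{x}},\sigma) - \boldsymbol{\tilde{x}})/\sigma^2$, multiplying through by $\sigma^2$ gives $\sigma^2 s(\boldsymbol{\tilde{x}},\sigma) = H(\boldsymbol{\tilde{x}},\sigma) - \boldsymbol{\tilde{x}}$. This is the only non-trivial ingredient; everything else is bookkeeping.

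Next, I would handle Algorithm~\ref{alg:anneal1}. Substituting $\alpha_i = \epsilon\sigma_i^2/\sigma_L^2 = \eta\sigma_i^2$ into the drift term and applying the identity yields
\[
\alpha_i s(\boldsymbol{x},\sigma_i) = \eta\sigma_i^2 s(\boldsymbol{x},\sigma_i) = \eta\bigl(H(\boldsymbol{x},\sigma_i) - \boldsymbol{x}\bigr),
\]
so $\boldsymbol{x} + \alpha_i s(\boldsymbol{x},\sigma_i) = (1-\eta)\boldsymbol{x} + \eta H(\boldsymbol{x},\sigma_i)$. It then remains to rewrite the noise scale: $\sqrt{2\alpha_i} = \sqrt{2\eta\sigma_i^2} = \sqrt{2\eta}\,\sigma_i$, which delivers the first claimed update rule verbatim.

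For Algorithm~\ref{alg:anneal2} the drift term is identical to Algorithm~\ref{alg:anneal1}, since both algorithms use the same $\alpha_i$, so the same substitution immediately gives $(1-\eta)\boldsymbol{x} + \eta H(\boldsymbol{x},\sigma_i)$. The only difference is that the noise term is already $\beta\sigma_{i+1}\boldsymbol{z}$ instead of $\sqrt{2\alpha_i}\boldsymbol{z}$, and this term is left untouched, giving the second claimed update rule.

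There is no real obstacle here: the entire content of the proposition is the identity $\sigma^2 s = H - \boldsymbol{x}$, which has already been established in the EDS discussion. The only mild subtlety worth flagging explicitly is the factoring $\sqrt{2\alpha_i} = \sigma_i\sqrt{2\eta}$ in the first case, which depends on $\alpha_i$ being \emph{linearly} proportional to $\sigma_i^2$ via the definition $\alpha_i = \epsilon\sigma_i^2/\sigma_L^2$; this is also why the remark following the proposition about the unconditional case (where $\eta$ would no longer be independent of $\sigma_i$) is needed. Given the brevity, I would present the proof as two short displayed derivations, one per algorithm, with a single sentence justifying the $H$-substitution at the start.
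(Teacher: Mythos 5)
Your proposal is correct and follows essentially the same route as the paper's own proof: both substitute the Empirical Bayes identity $\sigma_i^2 s(\boldsymbol{x},\sigma_i) = H(\boldsymbol{x},\sigma_i) - \boldsymbol{x}$ into the drift term using $\alpha_i = \eta\sigma_i^2$, factor the noise scale as $\sqrt{2\alpha_i} = \sqrt{2\eta}\,\sigma_i$, and observe that Algorithm~\ref{alg:anneal2} differs only in its (already final-form) noise term. Your explicit remark on why the factoring relies on $\alpha_i \propto \sigma_i^2$ is a small but accurate addition consistent with the paper's follow-up comment on the unconditional case.
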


\begin{proof}
Recall from Algorithm \ref{alg:anneal1} that $\alpha_i = \epsilon \frac{\sigma_i^2}{\sigma_L^2} = \eta \sigma_i^2$. Then, the update rule is as follows:

\begin{align*}
    \boldsymbol{x} &\gets \boldsymbol{{x}} + \alpha_i s(\boldsymbol{{x}}, \sigma_i) + \sqrt{2 \alpha_i}~ \boldsymbol{z}\\ &= \boldsymbol{{x}} + \eta \sigma_i^2 \left(\frac{H(\boldsymbol{x}) - \boldsymbol{{x}}}{\sigma_i^2}\right)+  \sqrt{2 \alpha_i}~ \boldsymbol{z} \\
    &= (1-\eta)\boldsymbol{{x}} + \eta H(\boldsymbol{x}) + \sqrt{2\eta}\sigma_i \boldsymbol{z}
\end{align*}
The same thing can be proven for Algorithm \ref{alg:anneal2} in the very same way.
\end{proof}

\section{Inception Score (IS)}\label{sec:is}

While the FID is improved by applying the EDS to image samples, the Inception Score is not.
Convolutional neural networks suffer from texture bias \citep{DBLP:journals/corr/abs-1811-12231}. Since the IS is built upon convolution layers, this flaw is also strongly present in the metric. Designed to answer the question of how easy it is to recover the class of an image, it tends to bias towards within-class texture similarity \citep{ISnote}.

Since we denoise the final image, we are evaluating the expected lower level of details across all classes. Therefore, the denoiser will confound the textures used by the IS to distinguish between classes, invariably worsening the score. Since the FID has already been shown to be more consistent with the level of noise than the IS \citep{heusel2017gans}, and since ALS methods are particularly prone to inject class-specific imperceptible noise, we would recommend against its use to compare within and between score matching models.

\newpage
\section{Uncurated samples}

\begin{figure}[ht] 
    \centering
    \includegraphics[width=0.6\linewidth]{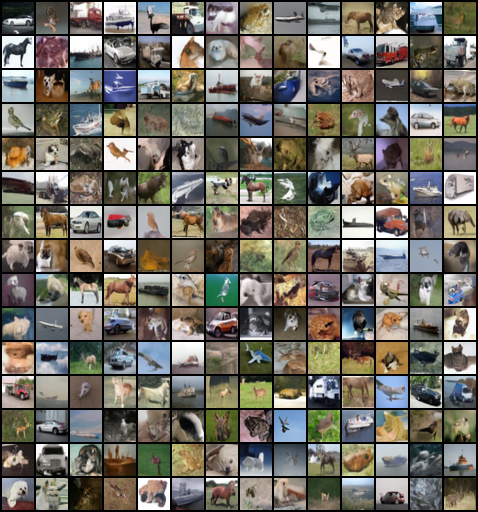}
    \caption{CIFAR-10 Non-adversarial non-consistent $n_\sigma=1$} 
\end{figure}
\begin{figure}[ht] 
    \centering
    \includegraphics[width=0.6\linewidth]{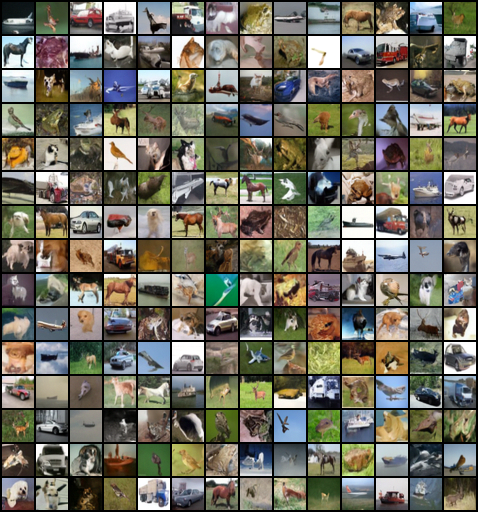}
    \caption{CIFAR-10 Adversarial non-consistent $n_\sigma=1$} 
\end{figure}

\begin{figure}[ht] 
    \centering
    \includegraphics[width=0.65\linewidth]{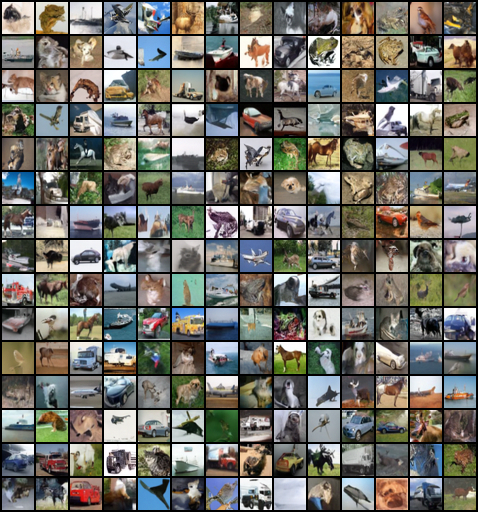}
    \caption{CIFAR-10 Non-adversarial non-consistent $n_\sigma=5$} 
\end{figure}
\begin{figure}[ht] 
    \centering
    \includegraphics[width=0.65\linewidth]{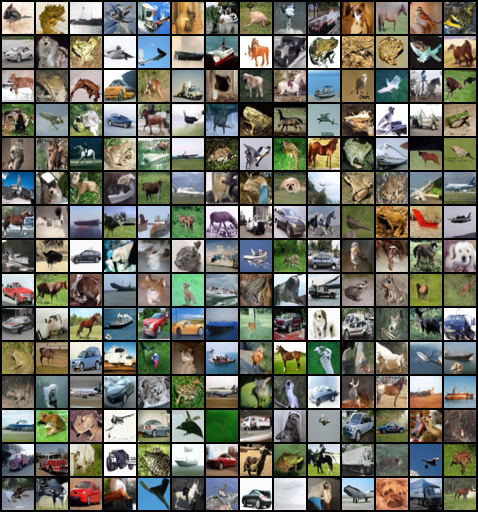}
    \caption{CIFAR-10 Adversarial non-consistent $n_\sigma=5$} 
\end{figure}

\begin{figure}[ht] 
    \centering
    \includegraphics[width=0.65\linewidth]{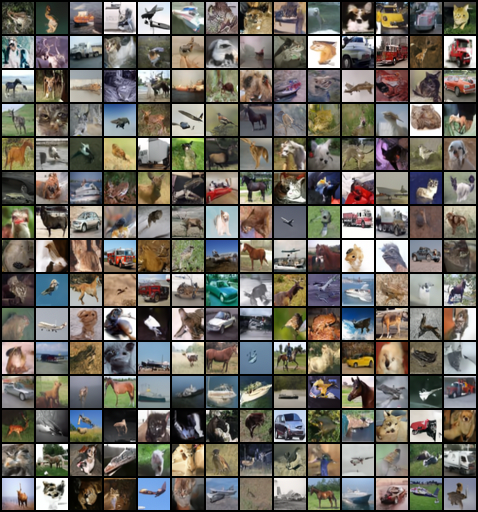}
    \caption{CIFAR-10 Non-adversarial consistent $n_\sigma=1$} 
\end{figure}
\begin{figure}[ht] 
    \centering
    \includegraphics[width=0.65\linewidth]{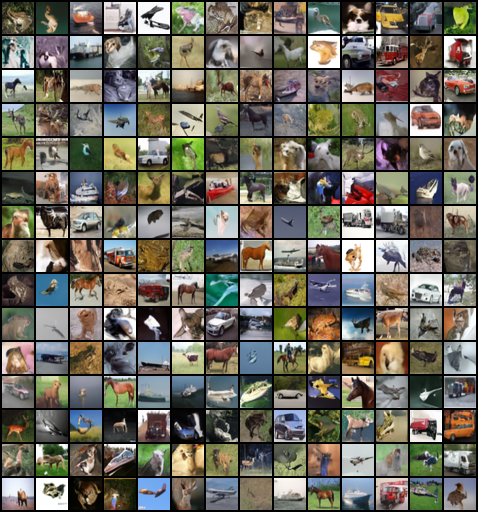}
    \caption{CIFAR-10 Adversarial consistent $n_\sigma=1$} 
\end{figure}

\begin{figure}[ht] 
    \centering
    \includegraphics[width=0.65\linewidth]{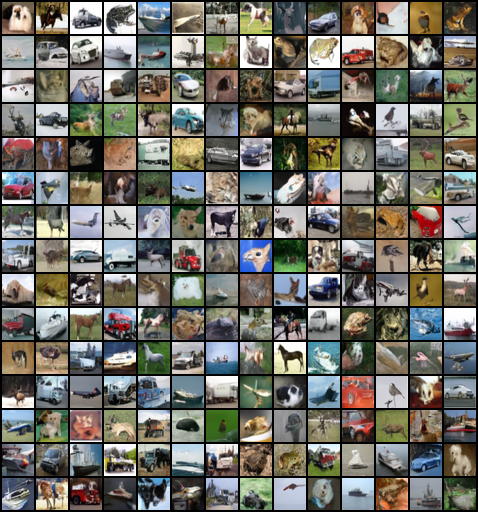}
    \caption{CIFAR-10 Non-adversarial consistent $n_\sigma=5$} 
\end{figure}
\begin{figure}[ht] 
    \centering
    \includegraphics[width=0.65\linewidth]{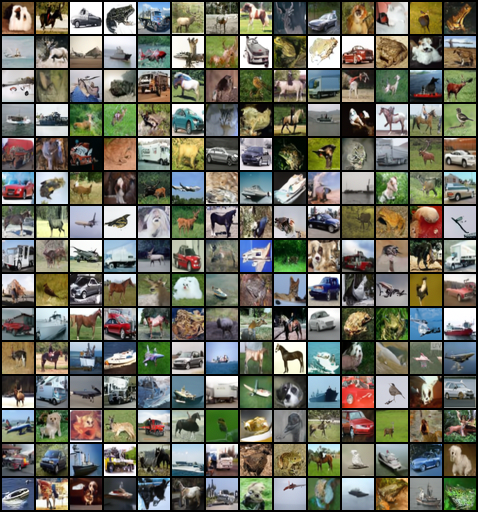}
    \caption{CIFAR-10 Adversarial consistent $n_\sigma=5$} 
\end{figure}


\begin{figure}[ht] 
    \centering
    \includegraphics[width=0.75\linewidth]{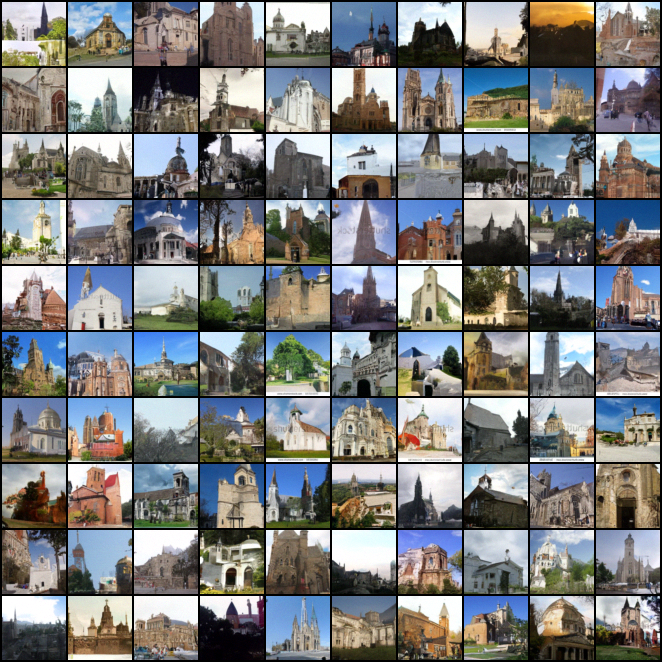}
    \caption{LSUN-Churches Non-adversarial non-consistent $n_\sigma=1$} 
\end{figure}
\begin{figure}[ht] 
    \centering
    \includegraphics[width=0.75\linewidth]{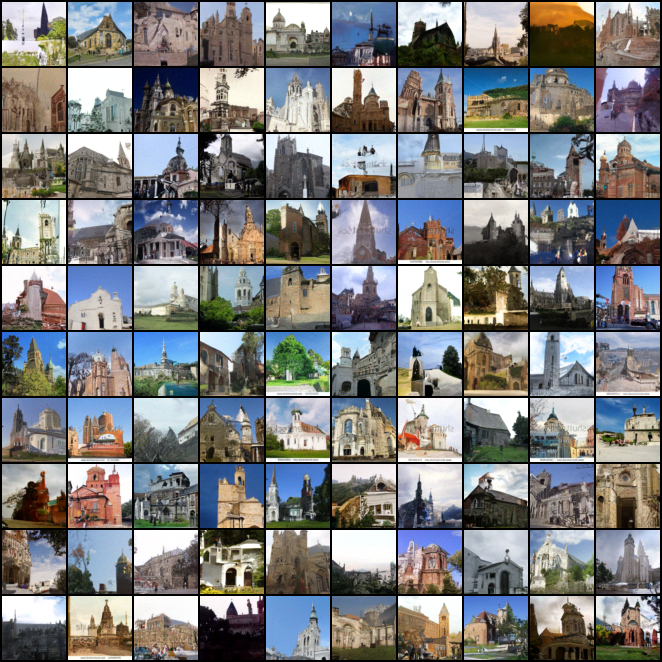}
    \caption{LSUN-Churches Adversarial non-consistent $n_\sigma=1$} 
\end{figure}

\begin{figure}[ht] 
    \centering
    \includegraphics[width=0.75\linewidth]{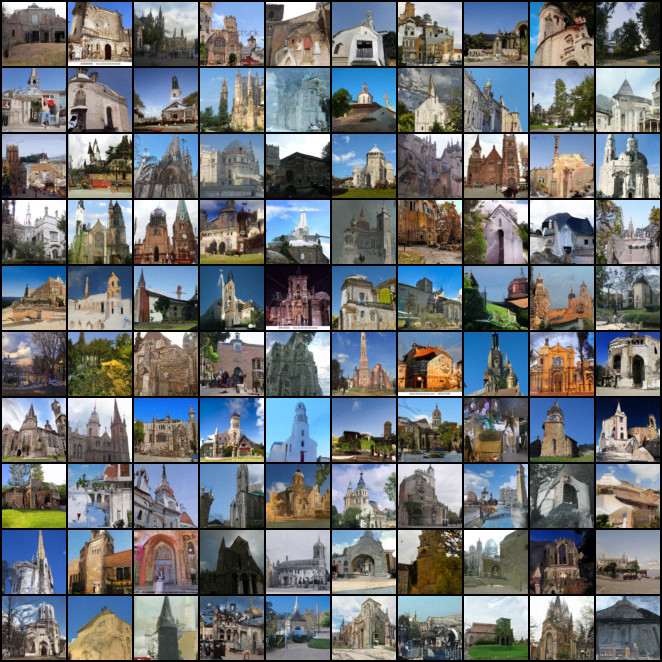}
    \caption{LSUN-Churches Non-adversarial non-consistent $n_\sigma=5$} 
\end{figure}
\begin{figure}[ht] 
    \centering
    \includegraphics[width=0.75\linewidth]{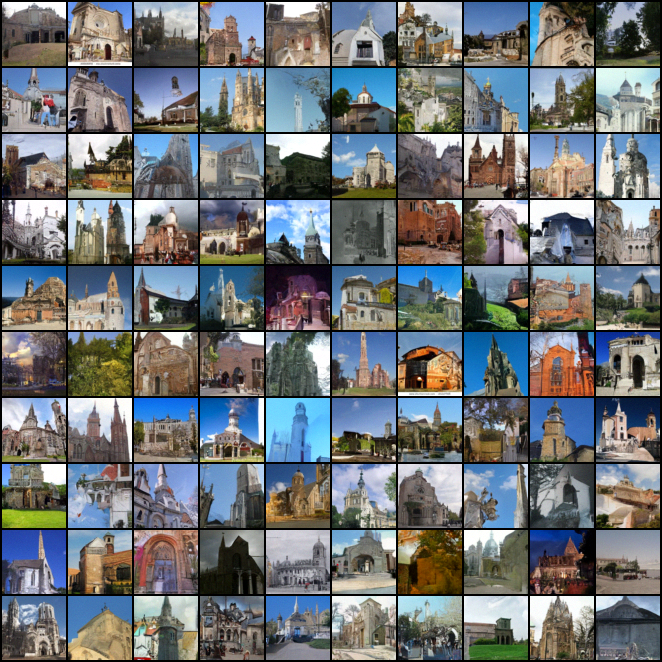}
    \caption{LSUN-Churches Adversarial non-consistent $n_\sigma=5$} 
\end{figure}

\begin{figure}[ht] 
    \centering
    \includegraphics[width=0.75\linewidth]{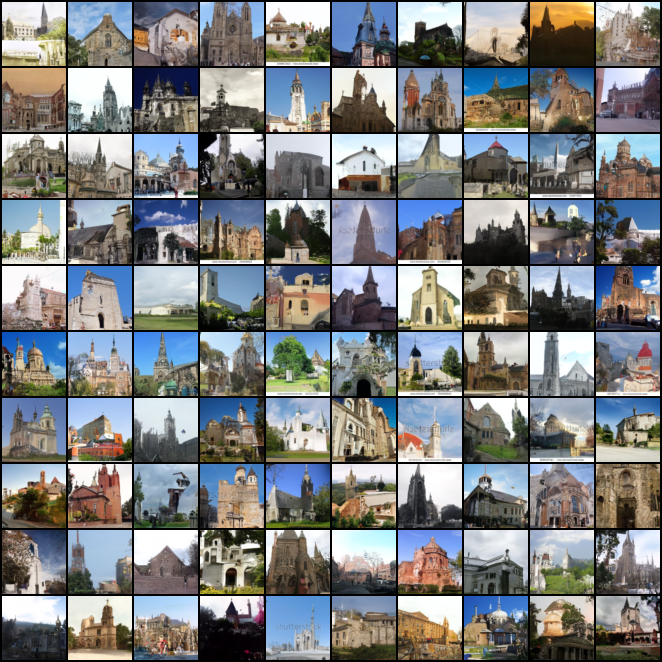}
    \caption{LSUN-Churches Non-adversarial consistent $n_\sigma=1$} 
\end{figure}
\begin{figure}[ht] 
    \centering
    \includegraphics[width=0.75\linewidth]{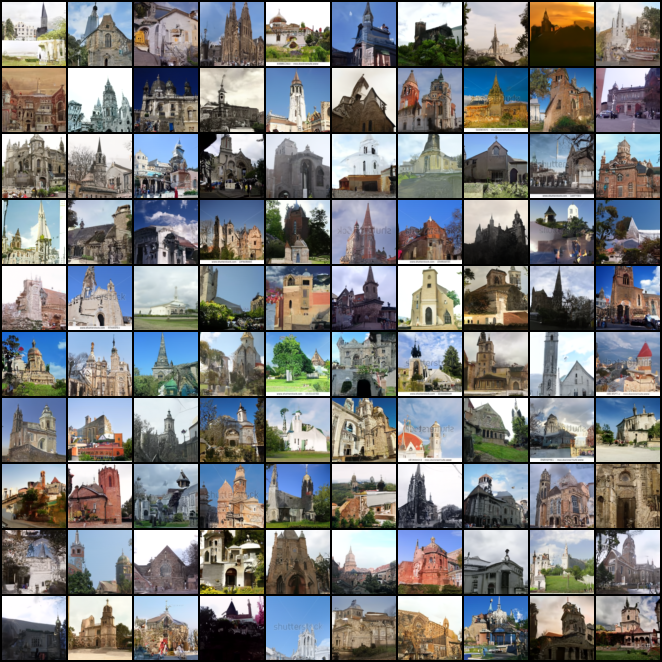}
    \caption{LSUN-Churches Adversarial consistent $n_\sigma=1$} 
\end{figure}

\begin{figure}[ht] 
    \centering
    \includegraphics[width=0.75\linewidth]{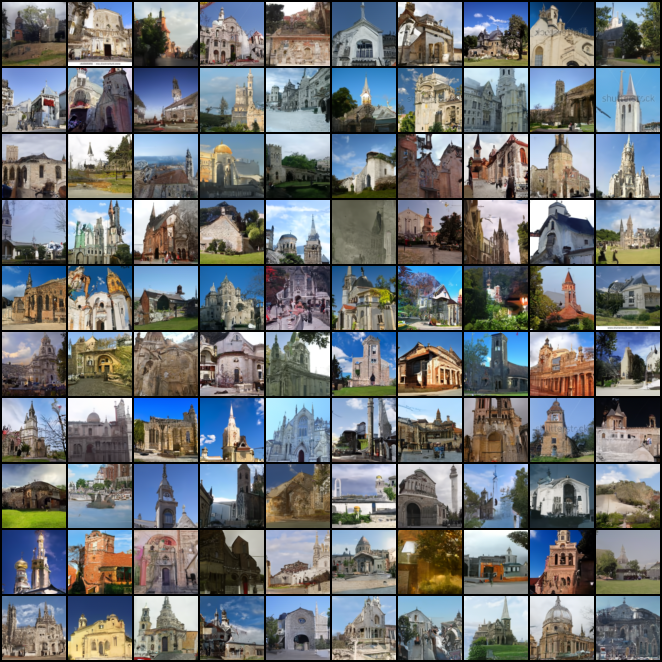}
    \caption{LSUN-Churches Non-adversarial consistent $n_\sigma=5$} 
\end{figure}
\begin{figure}[ht] 
    \centering
    \includegraphics[width=0.75\linewidth]{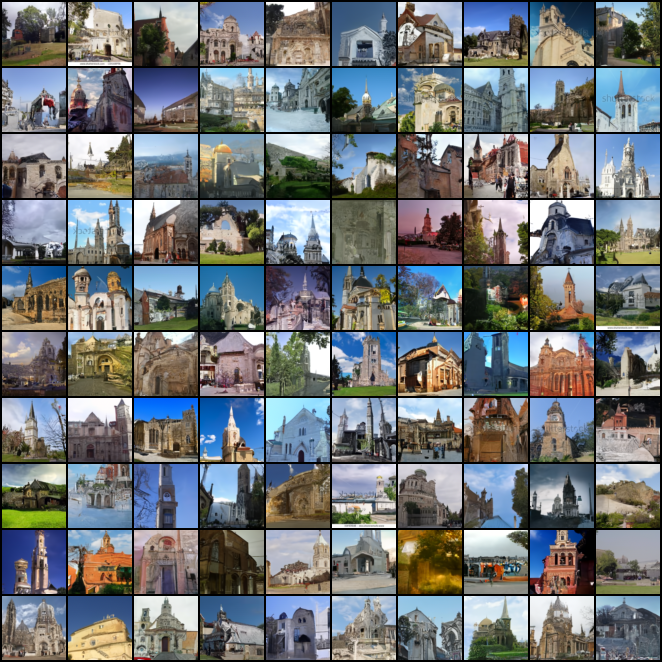}
    \caption{LSUN-Churches Adversarial consistent $n_\sigma=5$} 
\end{figure}


\begin{figure}[ht] 
    \centering
    \includegraphics[width=0.65\linewidth]{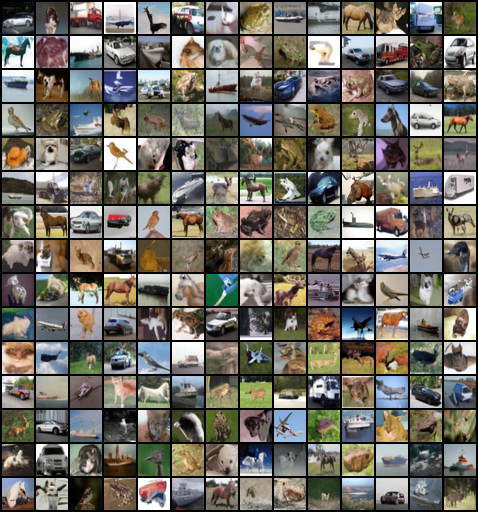}
    \caption{\citet{ho2020denoising} network architecture with CIFAR-10 Non-adversarial non-consistent $n_\sigma=1$} 
\end{figure}
\begin{figure}[ht] 
    \centering
    \includegraphics[width=0.65\linewidth]{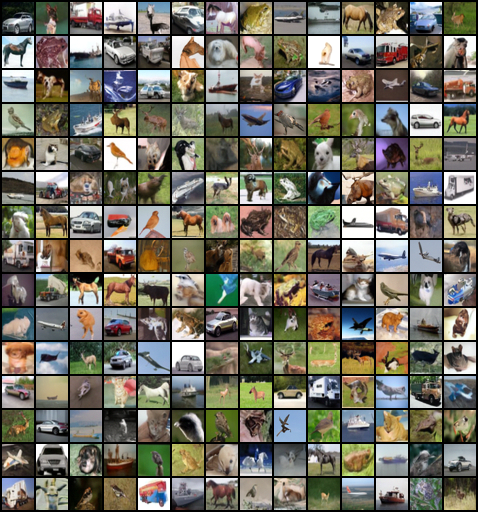}
    \caption{\citet{ho2020denoising} network architecture with CIFAR-10 Adversarial non-consistent $n_\sigma=1$} 
\end{figure}

\begin{figure}[ht] 
    \centering
    \includegraphics[width=0.65\linewidth]{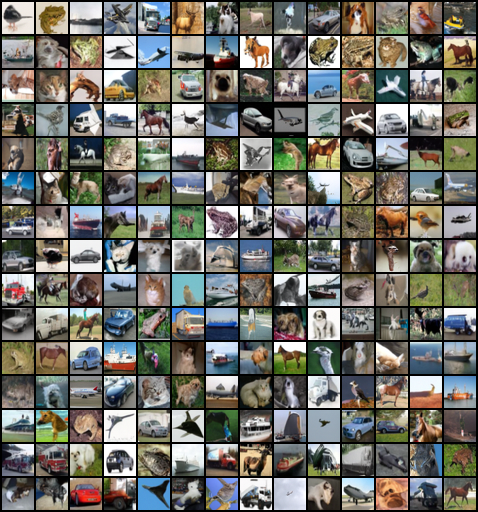}
    \caption{\citet{ho2020denoising} network architecture with CIFAR-10 Non-adversarial non-consistent $n_\sigma=5$} 
\end{figure}
\begin{figure}[ht] 
    \centering
    \includegraphics[width=0.65\linewidth]{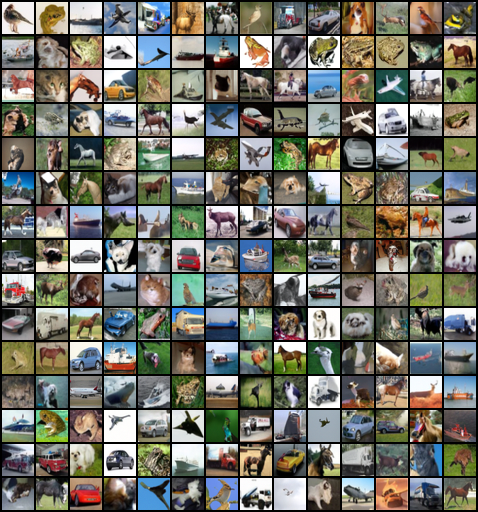}
    \caption{\citet{ho2020denoising} network architecture with CIFAR-10 Adversarial non-consistent $n_\sigma=5$} 
\end{figure}

\begin{figure}[ht] 
    \centering
    \includegraphics[width=0.65\linewidth]{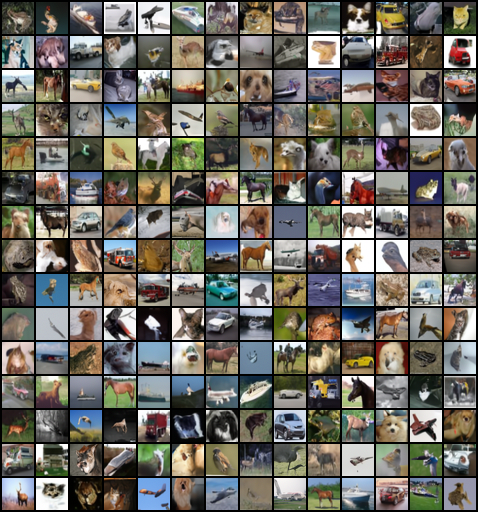}
    \caption{\citet{ho2020denoising} network architecture with CIFAR-10 Non-adversarial consistent $n_\sigma=1$} 
\end{figure}
\begin{figure}[ht] 
    \centering
    \includegraphics[width=0.65\linewidth]{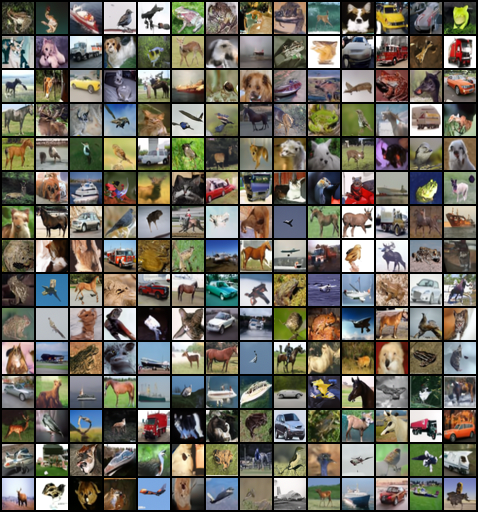}
    \caption{\citet{ho2020denoising} network architecture with CIFAR-10 Adversarial consistent $n_\sigma=1$} 
\end{figure}

\begin{figure}[ht] 
    \centering
    \includegraphics[width=0.65\linewidth]{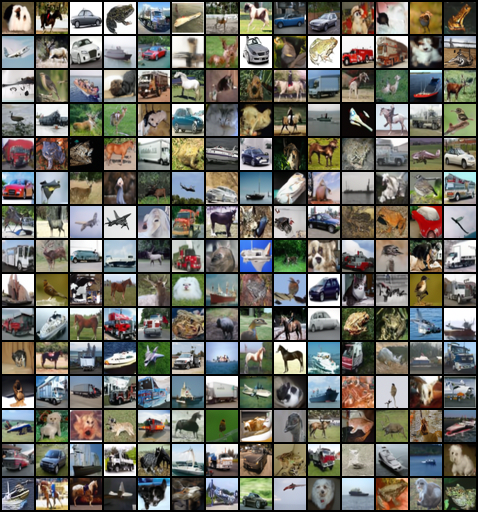}
    \caption{\citet{ho2020denoising} network architecture with CIFAR-10 Non-adversarial consistent $n_\sigma=5$} 
\end{figure}
\begin{figure}[ht] 
    \centering
    \includegraphics[width=0.65\linewidth]{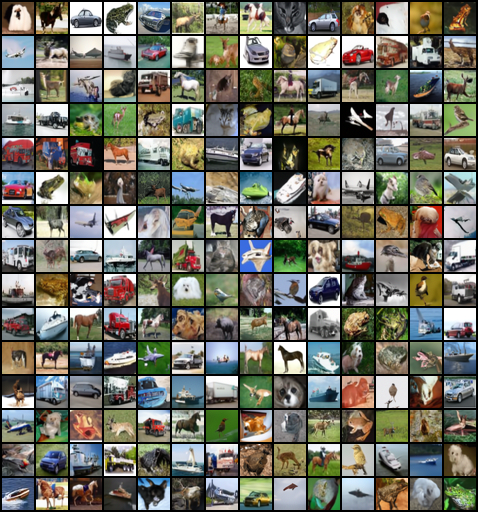}
    \caption{\citet{ho2020denoising} network architecture with CIFAR-10 Adversarial consistent $n_\sigma=5$} 
\end{figure}

\end{document}